\documentclass[conference]{ieeeconf}
\IEEEoverridecommandlockouts
\usepackage{style}
\usepackage{tikz}

\usepackage[final]{changes} % 最終提出時に切り替える用
\def\BibTeX{{\rm B\kern-.05em{\sc i\kern-.025em b}\kern-.08em
    T\kern-.1667em\lower.7ex\hbox{E}\kern-.125emX}}
\begin{document}
%%%%%%%%%%%%%%%%%%%%%%%%%%%%%%%%%%%%%%%%%%%%%%%%
%%%%%%%%%%%%%%%%%%%%%%%%%%%%%%%%%%%%%%%%%%%%%%%%
%%%%%%%%%%%%%%%%%%%%%%%%%%%%%%%%%%%%%%%%%%%%%%%%
% title and authors
%%%%%%%%%%%%%%%%%%%%%%%%%%%%%%%%%%%%%%%%%%%%%%%%
%%%%%%%%%%%%%%%%%%%%%%%%%%%%%%%%%%%%%%%%%%%%%%%%
%%%%%%%%%%%%%%%%%%%%%%%%%%%%%%%%%%%%%%%%%%%%%%%%

% \title{\LARGE \bf
% Preparation of Papers for IEEE Sponsored Conferences \& Symposia*
% }

% \author{Albert Author$^{1}$ and Bernard D. Researcher$^{2}$% <-this % stops a space
% \thanks{*This work was not supported by any organization}% <-this % stops a space
% \thanks{$^{1}$Albert Author is with Faculty of Electrical Engineering, Mathematics and Computer Science,
%         University of Twente, 7500 AE Enschede, The Netherlands
%         {\tt\small albert.author@papercept.net}}%
% \thanks{$^{2}$Bernard D. Researcheris with the Department of Electrical Engineering, Wright State University,
%         Dayton, OH 45435, USA
%         {\tt\small b.d.researcher@ieee.org}}%
% }

\title{\LARGE \bf
Conformal Koopman for Embedded Nonlinear Control with Statistical Robustness: Theory and Real-World Validation
}
\author{Koki Hirano and Hiroyasu Tsukamoto
\thanks{This work benefited from technical discussions within DARPA’s Safe and Assured Foundation Robots for Open eNvironments (SAFRON) Program, under contract number HR0011-25-3-0331. The authors are with the Department of Aerospace Engineering, The Grainger College of Engineering, University of Illinois Urbana-Champaign, Urbana, Illinois 61801, {\tt\small khirano@illinois.edu, hiroyasu@illinois.edu}.
}
}
%%%%%%%%%%%%%%%%%%%%%%%%%%%%%%%%%%%%%%%%%%%%%%%%
%%%%%%%%%%%%%%%%%%%%%%%%%%%%%%%%%%%%%%%%%%%%%%%%
%%%%%%%%%%%%%%%%%%%%%%%%%%%%%%%%%%%%%%%%%%%%%%%%
% main document
%%%%%%%%%%%%%%%%%%%%%%%%%%%%%%%%%%%%%%%%%%%%%%%%
%%%%%%%%%%%%%%%%%%%%%%%%%%%%%%%%%%%%%%%%%%%%%%%%
%%%%%%%%%%%%%%%%%%%%%%%%%%%%%%%%%%%%%%%%%%%%%%%%

\maketitle
\begin{tikzpicture}[remember picture,overlay]
\node[anchor=south,yshift=10pt] at (current page.south) {
    \parbox{0.9\textwidth}{
        \footnotesize \textcopyright 2026 IEEE. Personal use of this material is permitted. Permission from IEEE must be obtained for all other uses, in any current or future media, including reprinting/republishing this material for advertising or promotional purposes, creating new collective works, for resale or redistribution to servers or lists, or reuse of any copyrighted component of this work in other works.
    }
};
\end{tikzpicture}
\begin{tikzpicture}[remember picture,overlay]
\node[anchor=north,yshift=-30pt] at (current page.north) {
    \parbox{0.9\textwidth}{
        \centering \footnotesize
        This work has been accepted for publication in ICRA 2026. The final published version will be available via IEEE Xplore.
    }
};
\end{tikzpicture}
\thispagestyle{empty}
\pagestyle{empty}
\begin{abstract}
We propose a fully data-driven, Koopman-based framework for statistically robust control of discrete-time nonlinear systems with linear embeddings. Establishing a connection between the Koopman operator and contraction theory, it offers distribution-free probabilistic bounds on the state tracking error under Koopman modeling uncertainty. Conformal prediction is employed here to rigorously derive a bound on the state-dependent modeling uncertainty throughout the trajectory, ensuring safety and robustness without assuming a specific error prediction structure or distribution. Unlike prior approaches that merely combine conformal prediction with Koopman-based control in an open-loop setting, our method establishes a closed-loop control architecture with formal guarantees that explicitly account for both forward and inverse modeling errors. Also, by expressing the tracking error bound in terms of the control parameters and the modeling errors, our framework offers a quantitative means to formally enhance the performance of arbitrary Koopman-based control. We validate our method both in numerical simulations with the Dubins car and in real-world experiments with a highly nonlinear flapping-wing drone. The results demonstrate that our method indeed provides formal safety guarantees while maintaining accurate tracking performance under Koopman modeling uncertainty.
\end{abstract}
\section{Introduction}
% \begin{itemize}
%     \item Flapper~\cite{flapper_original}
%     \item Koopman and contraction~\cite{koopman_contraction}
% \end{itemize}
Data-driven control with the Koopman operator has shown potential in applying linear control techniques to nonlinear dynamical systems. By modeling the operator with finite-dimensional representations, the original nonlinear system can be lifted into a higher-dimensional latent space where the dynamics evolutions are approximately linear. In the lifted space, well-established linear control methods can be systematically applied to design computationally efficient and interpretable control policies \cite{Korda2018, Lusch2018}.

Recent studies have employed deep neural networks to learn the Koopman operator and its lifting functions \cite{Lusch2018, takeishi2017learning, wehmeyer2018time, morton2018deep}. However, two primary sources of modeling error persist: (i) the learned latent dynamics retain residual nonlinearity, as a finite-dimensional set of observables cannot perfectly linearize arbitrary nonlinear dynamics, and (ii) an encoder--decoder mismatch arises since the decoder is not explicitly constrained to be the inverse of the encoder. Such model approximation errors can critically impact control performance, especially in applications that require formal safety or stability guarantees. Existing Koopman-based control frameworks typically address these issues through assumptions on the modeling error and prediction structure. Some approaches derive Lipschitz bounds from training residuals to design robust model predictive control (MPC) formulations \cite{Mamakoukas2022}, while others incorporate auxiliary controllers or ensemble-based uncertainty modeling to compensate for latent space inaccuracies \cite{Zhang2022Automatica, Wang2023IJMLC}. These structural assumptions, however, are often difficult to justify and may not hold in practice.

\begin{figure}[t]
    \centering
    \includegraphics[width=1\linewidth]{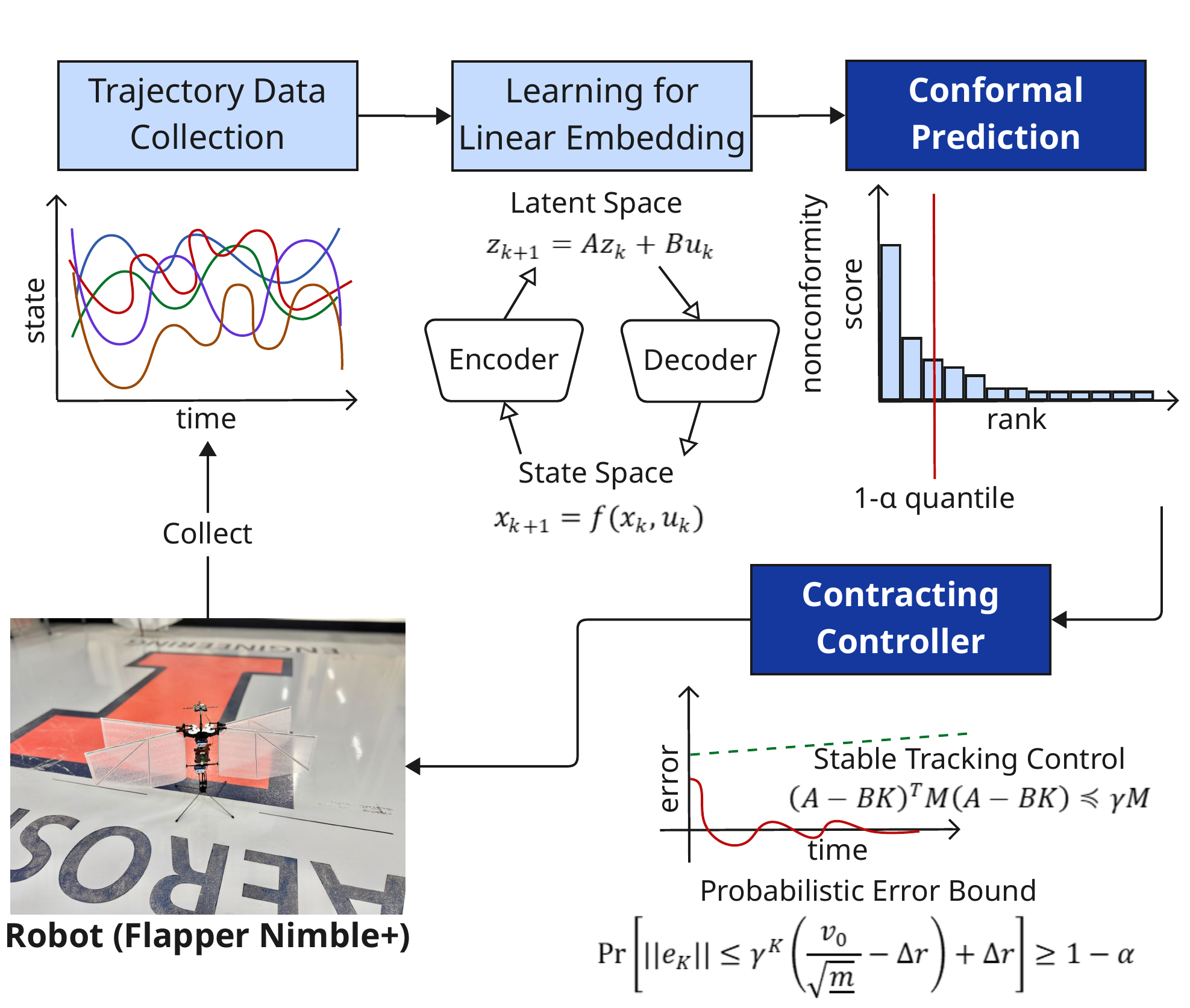}
    \caption{Overview of the conformal Koopman for embedded nonlinear control with statistical robustness.}
    \label{fig:concept}
    \vspace{-2em}
\end{figure}

To address these challenges of residual nonlinearity and encoder--decoder mismatch in a principled, data-driven manner, we propose a framework that integrates conformal prediction into \emph{closed-loop} Koopman-based control. Conformal prediction is a statistical technique that provides distribution-free, finite-sample guarantees on prediction errors under the mild assumption of exchangeable data \cite{conformal1, conformalbook}. We leverage this technique to move beyond empirical, open-loop prediction, formally quantifying the combined effect of these modeling errors on the closed-loop system. This provides a statistically rigorous probabilistic bound on the tracking error induced by a Lyapunov-based feedback control, even with the imperfectly linearized latent space. The connection between latent-space stability and stability of original discrete-time nonlinear systems is also established through the lens of contraction theory. Our main contributions are as follows:
\begin{enumerate}
\item We propose a statistical method to quantify the closed-loop tracking error in Koopman-based control, capturing residual nonlinearity and encoder--decoder mismatch via distribution-free guarantees of conformal prediction.
\item We establish a quantitative guideline for controller design by explicitly relating the closed-loop tracking performance to key system parameters (e.g., control gains and model uncertainty), enabling the theoretically formal design of learning-based Koopman controllers.
\item We apply this method to provide formal, probabilistic guarantees on tracking performance for discrete-time nonlinear systems controlled by Lyapunov-based feedback, connecting latent-space stability with stability in the original systems via contraction theory.
\item We validate our framework both in simulations and hardware experiments on a flapping-wing robot (\emph{Flapper Nimble+ from Flapper Drones}, \myhref{https://flapper-drones.com/}{https://flapper-drones.com/}), demonstrating its practical applicability in providing reliable performance guarantees for highly nonlinear systems\added{, where our controller serves as a high-level tracking layer that outputs target positions to an onboard PID}.
\end{enumerate}

\subsubsection*{Related Work}
%\subsection{Koopman Operator-Based Control}

The Koopman operator provides a linear but infinite-dimensional representation of nonlinear dynamical systems. For their practical applications, approximations in finite-dimensional subspaces are typically employed---e.g., via extended dynamic mode decomposition (eDMD) or neural networks---which enable the use of linear techniques in the analysis and control of nonlinear dynamical systems. For instance, \cite{Lusch2018} proposed deep autoencoders to learn Koopman-invariant subspaces and apply linear MPC in the latent space. Robustness to modeling errors has been studied in \cite{Mamakoukas2022ACC}, where a robust MPC is constructed under the assumption that the residual modeling error is Lipschitz continuous. However, this assumption can be restrictive, as strong nonlinearities may arise from both the underlying system dynamics and the neural networks often used to learn the lifting functions. 

Probabilistic finite-sample error bounds were proposed in~\cite{Nuske2023}, offering some theoretical guarantees for control systems approximated via Koopman embeddings with eDMD. Other approaches to compensating for the imperfect linearity include, but are not limited to, tube-based MPC \cite{Zhang2022Automatica}, polytopic uncertainty modeling  \cite{Wang2023IJMLC}, and stochastic methods that model uncertainty explicitly in the latent space \cite{Han2022ICLR}. However, many of these approaches rely on structural or distributional assumptions about the model prediction error. This limits their applicability in safety-critical scenarios where such assumptions may not be entirely valid.
%\subsection{Conformal Prediction for Control}

Conformal prediction \cite{conformal1, conformalbook} has recently emerged as a powerful framework for uncertainty quantification in data-driven systems. It operates under the assumption of exchangeability, i.e., the joint probability distribution of sampled data is invariant under permutations, thereby relaxing the typical i.i.d. requirement. When applied to dynamical system modeling, this property eliminates the need for structural or distributional priors, making it a promising tool for designing interpretable data-driven control~\cite{2024_Lindemann_CP-control-survey}. This concept has been applied in diverse areas of control theory, including safe motion planning~\cite{2023_Lindemann_CP-Planning, 2023_Dixit_ACP-Planning, 2023_Sun_CP-Diffusion}, robust control of linear systems~\cite{Patel2024Conformal,CPLinearStochastic}, safe learning-based MPC~\cite{2023_Chee_WPC-MPC,confcont3,2024_Zhou_ACP-CBF-MPC}, and Lyapunov-based analysis~\cite{2023_Yang_CP-Sensor, TingWeiCProbust, SWei_conformalContraction}.
%  % It allows for constructing prediction intervals or sets that contain the true outcome with a user-specified probability, without assuming a particular noise distribution.
% In the context of control, conformal prediction has been used to design safe planning frameworks and filtering mechanisms for reinforcement learning agents \cite{Patel2024Conformal}. These methods typically apply conformal prediction to estimate bounds on trajectory prediction errors, enabling the enforcement of safety constraints under uncertainty.

Conformal prediction is also explored in Koopman-based control~\cite{liang2025safenavigationdynamicenvironments} to derive a statistical bound on one-step ahead state prediction errors in Koopman-based MPC. While it addresses modeling error in the state space in an open-loop, single-step prediction setting, our work focuses on the closed-loop, trajectory-wise control performance degradation caused by imperfectly learned latent spaces. In particular, we explicitly consider aggregated errors from residual nonlinearity and the encoder--decoder mismatch, the latter of which arises from the decoder being an imperfect inverse of the encoder. Our key contribution is to establish a formal statistical characterization of closed-loop tracking error and robustness in such imperfectly lifted spaces. This is supported by its hardware validation with Lyapunov-based control, with the connection between latent-space stability and stability of original systems implied via contraction theory.

\section{PRELIMINARIES AND PROBLEM FORMULATION}
\label{sec_problem}
%\subsection{System Dynamics}
We consider the following discrete-time nonlinear control system indexed by the time step $k$:
\begin{equation}
    x_{k+1} = f(x_k, u_k),
    \label{eq:nonlinear_system}
\end{equation}
where $x_k \in \mathbb{R}^n$ is the state, $u_k \in \mathbb{R}^m$ is the control input, and $f: \mathbb{R}^n \times \mathbb{R}^m \rightarrow \mathbb{R}^n$ is an \emph{unknown} nonlinear function.

\subsection{Koopman Operator Framework}
The Koopman operator is a linear operator that describes the evolution of observation functions of the state. By lifting the state to a higher-dimensional space of observables, the nonlinear dynamics in the original state space can be represented by linear dynamics in the lifted space.

Let $\mathcal{G}$ be the space of observable functions $g: \mathbb{R}^n \rightarrow \mathbb{R}$. The Koopman operator $\mathcal{K}$ acts on an observable $g \in \mathcal{G}$ as:
\begin{equation}
    (\mathcal{K}g)(x) = g(f(x, u)).
\end{equation}
In practice, we use a finite-dimensional approximation of the Koopman operator. We choose a vector of $N$ basis functions $\hat{g}: \mathbb{R}^n \rightarrow \mathbb{R}^N$ to lift the state $x_k$ to a higher-dimensional state $z_k = \hat{g}(x_k)$. This allows us to approximate the system~\eqref{eq:nonlinear_system} by a linear system with a residual error term.

% Definition 1: Koopman Lifted System
\begin{definition}[Koopman Lifted System]
The system \eqref{eq:nonlinear_system} is approximated by the following lifted linear system with a residual error $d(x_k, u_k)$:
\begin{equation}
    z_{k+1} = A z_k + B u_k + d(x_k, u_k)
    \label{eq:lifted_system}
\end{equation}
where $z_k = \hat{g}(x_k) \in \mathbb{R}^N$ is the lifted state. The matrices $A \in \mathbb{R}^{N \times N}$ and $B \in \mathbb{R}^{N \times m}$ are learned from data such that the pair $(A, B)$ is controllable.
\end{definition}

This approximation enables the full utilization of linear control theory for analyzing and designing control policies. The residual term $d(x_k, u_k)$ denotes the difference between the lifted next observed state $\hat{g}(x_{k+1})$ and the linear propagation of the previous observed state $(A z_k + B u_k)$. This will be quantified using conformal prediction in our method. 
%Our goal is to design a controller that is robust to this error.  

\subsection{Split Conformal Prediction}
Given a dataset split into a training set and a calibration set, let us consider a point predictor $\widehat{m}(\cdot)$ fitted on the training set.  
For each point $(x_i,y_i)$ in the calibration set, we compute the nonconformity score as
\begin{align}
R_i \coloneqq \|y_i - \widehat{m}(x_i)\|, \quad i=1,\dots,m.
\end{align}
Let $R_{(1)} \le \cdots \le R_{(m)}$ be the ordered residuals and define $k \coloneqq \lceil (m+1)(1-\alpha) \rceil$. 
If the calibration scores are exchangeable with the score of a newly acquired point $(x_{m+1},y_{m+1})$, then we have the following~\cite{conformal1, conformalbook}:
\begin{align}
\Pr[\,Y_{m+1} \in \hat{C}(X_{m+1})]
\;\ge\; \tfrac{\lceil (m+1)(1-\alpha)\rceil}{\,m+1\,}
\ge 1-\alpha
\end{align}
where the prediction set is given by
\begin{align}
\hat{C}(x_{n+1}) = 
\bigl[\;\widehat{m}(x_{n+1}) - R_{(k)},\;\widehat{m}(x_{n+1}) + R_{(k)}\;\bigr].
\end{align}

\section{MAIN CONTRIBUTIONS}
\label{sec_tracking}
In this section, we present our primary result: a robust tracking controller and a predictive planner designed in the Koopman lifted space. We first analyze the connection between stability in the Koopman lifted space and stability in our original system using contraction theory. We then delineate the controller design and derive high-probability bounds on the tracking error. Finally, we \replaced{derive high-probabilistic guarantees on safety}{design a predictive planner that has high-probabilistic guarantees on safety}.
\subsection{Controller Design}
We design two types of tracking controllers in the lifted system \eqref{eq:lifted_system}. Here, we denote a reference trajectory by $(x_{d,k}, u_{d,k})$, with its lifted counterpart in the latent space computed by $z_{d,k} = \hat{g}(x_{d,k})$. \added{The subscript $d$ is used to denote the desired states and inputs.}  
The target system is represented as:
\begin{align}
z_{d, k+1} = Az_{d, k} + Bu_{d, k} + d'_k \left(z_{d,k}, u_{d,k}\right)
\label{eq:ref_lifted_system}
\end{align}
where $d'_k: \mathbb{R}^p\times \mathbb{R}^m \mapsto \mathbb{R}^p$ is an unknown function denoting the difference between the lifted next target state $\hat{g}(x_{d,k+1})$ and the linear propagation of the previous target state. 

\subsubsection{Nominal Feedback Controller}
Let us first define a Lyapunov-based feedback controller for trajectory tracking.
\begin{definition}[Nominal Feedback Controller (NFC)]
\label{def:nominal_feedback}
Our nominal feedback controller is defined as:
\begin{equation}
    u_{k} = u_{d,k} - Ke_k.
    \label{eq:nominal_controller}
\end{equation}
Here, the gain matrix $K$ is designed such that for a given positive definite matrix $M=\Theta^T\Theta\succ0$ satisfying $\underline{m}\mathbb{I}\preceq M\preceq \overline{m}\mathbb{I}$, the closed-loop matrix $A_{cl} = A - BK$ satisfies the following Lyapunov condition:
\begin{equation}
A_{cl}^\top M A_{cl} \preceq \gamma M
\label{eq:lmi}
\end{equation}
for a given decay rate $\gamma \in (0,1)$.
\end{definition}

By defining $\hat{d}_k(z_k, u_k, z_{d,k}, u_{d,k}) = d_k(z_k, u_k) - d'_k(z_{d,k}, u_{d,k})$, the dynamics of the tracking error $e_k \coloneqq z_k - z_{d, k}$ can be written as:
\begin{align}
    e_{k+1} = A_{cl}e_k + \hat{d}_k(z_k, u_k, z_{d,k}, u_{d,k}).
    \label{eq:error_dynamics}
\end{align}

\subsubsection{Controller with Robust Disturbance Rejection}
While the nominal controller ensures stability for the nominal system, its performance can be degraded by the residual dynamics $\hat{d}_k$. To enhance robustness, we formulate a controller that explicitly mitigates such disturbances by solving an optimization problem at each time step. This is to maintain stability while minimizing deviation from a desired control input. We now define the Controller with Robust Disturbance Rejection (CRDR).
\begin{definition}[CRDR]
The control input $u_k$ is determined by solving the following convex optimization problem at each time step $k$:
\begin{align}
\label{eq:robust_constraint}
\begin{aligned}
    &(u_k, \Delta_{v, k}) = \arg \min_{(u_k, \Delta_{v, k})} \left\| \Delta u_k \right\|^2 + c_v \Delta_{v, k}^2\\
% \label{eq:robust_opt_problem}
&\text{s.t.} \quad \left\| \Theta (A e_k + B \Delta u_k) \right\| \le \gamma \left\| \Theta e_k \right\| - \rho + \Delta_{v, k}
\end{aligned}
\end{align}
where $\Delta u_k \coloneqq u_k-u_{d,k}$. Here, $v_k\coloneqq||\Theta (z_k - z_{d,k})||$ is a Lyapunov function, with $\Theta$ obtained by solving the Lyapunov inequality for linear systems in~\eqref{eq:lmi}.
%$\Theta$ is computed by the linear matrix inequality~\eqref{eq:lmi}.
The constraint~\eqref{eq:robust_constraint} enforces contraction on the nominal dynamics. The positive constant $\rho$ serves as a robustness margin against the disturbance $\hat{d}_k$, while the slack variable $\Delta v_k$ ensures feasibility of the optimization even under unforeseen disturbances. The coefficient $c_v > 0$ is a penalty weight on the slack variable.    
\label{def:disturbance_rejection}
\end{definition}

\subsection{Connection to Contraction Theory}
Our control design is founded on the principle that stabilizing the system in the lifted space robustly enforces stability in the original nonlinear state space. This is formalized by the equivalence between Koopman stability and contraction, as established in~\cite{koopman_contraction} at least for continuous-time systems. We extend their work to discrete-time systems with the nominal feedback controller defined in Definition \ref{eq:nominal_controller}.

\begin{theorem}
If the Jacobian of the lifting map, $\hat{G}(x_k) = \tfrac{\partial \hat{g}}{\partial x}(x_k)$, has full column rank, then the matrix $W(x_k) = \hat{G}(x_k)^\top M \hat{G}(x_k)$ is a valid contraction metric for the original nonlinear system, i.e.,
\begin{equation}
    \label{eq_contraction_koopman}
    \tfrac{\partial f_{cl}}{\partial x}(x_k)^T W(f_{cl}(x_k))\tfrac{\partial f_{cl}}{\partial x}(x_k)
    \preceq \gamma W(x), \quad \forall x_k\in \mathbb{R}^n.
\end{equation}
\end{theorem}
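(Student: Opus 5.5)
The plan is to prove the contraction inequality by differentiating the exact lifted closed-loop relation. The argument works under the idealized hypothesis that the lifted dynamics are exactly linear along the closed loop, i.e.\ the residual $d$ in \eqref{eq:lifted_system} (and hence $\hat{d}_k$ in \eqref{eq:error_dynamics}) vanishes. The residual is treated separately later through conformal prediction.

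\textbf{Step 1: lifted closed loop.} With the nominal feedback controller \eqref{eq:nominal_controller}, the control is $u_k = u_{d,k} - K(\hat{g}(x_k) - z_{d,k})$. The closed-loop map is then $f_{cl}(x_k) = f\bigl(x_k, u_{d,k} - K(\hat{g}(x_k)-z_{d,k})\bigr)$. Under exact linearity in the lifted space,
\begin{equation*}
\hat{g}(f_{cl}(x_k)) = A\hat{g}(x_k) + B u_{d,k} - BK\hat{g}(x_k) + BKz_{d,k} = A_{cl}\hat{g}(x_k) + c_k ,
\end{equation*}
where $c_k$ does not depend on $x_k$.

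\textbf{Step 2: differentiate.} Differentiating this identity in $x_k$ by the chain rule gives the key relation
\begin{equation*}
\hat{G}(f_{cl}(x_k))\,\tfrac{\partial f_{cl}}{\partial x}(x_k) = A_{cl}\,\hat{G}(x_k).
\end{equation*}
This is the discrete-time analogue of the intertwining identity used in \cite{koopman_contraction}.

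\textbf{Step 3: contraction inequality.} Write $F = \tfrac{\partial f_{cl}}{\partial x}(x_k)$. Then
\begin{equation*}
F^\top W(f_{cl}(x_k)) F = \bigl(\hat{G}(f_{cl}(x_k))F\bigr)^\top M \bigl(\hat{G}(f_{cl}(x_k))F\bigr) = \hat{G}(x_k)^\top A_{cl}^\top M A_{cl}\hat{G}(x_k).
\end{equation*}
Congruence by $\hat{G}(x_k)$ preserves the ordering in \eqref{eq:lmi}, so the right-hand side is $\preceq \gamma\,\hat{G}(x_k)^\top M\hat{G}(x_k) = \gamma W(x_k)$. This is \eqref{eq_contraction_koopman}.

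\textbf{Step 4: validity of the metric.} $W$ must be a uniformly positive definite metric. Full column rank of $\hat{G}$ together with $M \succeq \underline{m}\mathbb{I}$ gives $W(x) \succeq \underline{m}\,\sigma_{\min}(\hat{G}(x))^2\,\mathbb{I} \succ 0$. For uniform bounds one additionally needs $\sigma_{\min}(\hat{G})$ bounded away from zero and $\|\hat{G}\|$ bounded. I would state these as mild assumptions if needed; the upper bound is $\overline{m}\|\hat{G}\|^2$.

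\textbf{Main obstacle.} The hard point is Step 2. It needs the lifted linear relation to hold exactly as an identity in $x$, not just along sampled data, and it needs the reference-dependent terms to be independent of $x_k$. If $d \neq 0$, an extra term $\partial d/\partial x$ appears. I would first try to make the exactness assumption explicit, as in \cite{koopman_contraction}. Failing that, I would carry the term $\hat{G}(f_{cl})^{-L}\,\partial d/\partial x$ as a perturbation, using a left inverse, which exists by full column rank. Only this perturbed version would connect to the conformal bounds.
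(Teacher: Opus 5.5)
Your proposal is correct and follows the paper's proof essentially step for step. The paper likewise assumes exact latent linearity ($e_{k+1}=A_{cl}e_k$), differentiates to obtain $\hat{G}(f_{cl}(x_k))\tfrac{\partial f_{cl}}{\partial x}(x_k)=A_{cl}\hat{G}(x_k)$, and concludes by congruence of \eqref{eq:lmi} with $\hat{G}(x_k)$; your explicit exactness hypothesis and your Step 4, which is where full column rank actually enters to make $W$ positive definite, spell out what the paper leaves implicit.
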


\begin{proof}
Let $f_{cl}(x_k)\coloneqq f(x_k, u_k) = x_{k+1}$ where $u_k$ is given by \eqref{eq:nominal_controller}. 
From the definition of linear dynamics in the latent space, the following equation holds:
\begin{align}
    e_{k+1}=A_{cl}e_k,~~\hat{g}(x_{k+1}) - z_{d,k+1}=A_{cl}(\hat{g}(x_k)-z_{d,k}).
\end{align}
Differentiating both sides with respect to $x$, we get 
\begin{align}
    \hat{G}(f_{cl}(x_k))\tfrac{\partial f_{cl}}{\partial x}(x_k) = A_{cl}\hat{G}(x_k)
\end{align}
which implies
\begin{align}
    &\tfrac{\partial f_{cl}}{\partial x}(x_k)^T W(f_{cl}(x_k))\tfrac{\partial f_{cl}}{\partial x}(x_k)\\
    &= \tfrac{\partial f_{cl}}{\partial x}(x_k)^T\hat{G}(x_k)^\top M \hat{G}(x_k)\tfrac{\partial f_{cl}}{\partial x}(x_k)\\
    &=G(x)^TA_{cl}^TMA_{cl}G(x_k).
\label{eq:thm1-1}
\end{align}
Since multiplying $\hat{G}(x)^T$ from the left side and $\hat{G}(x)$ from the right side on \eqref{eq:lmi} gives $\hat{G}(x_k)^T A_{cl}^T M A_{cl} \hat{G}(x_k) \preceq \gamma \hat{G}(x_k)^TM\hat{G}(x_k)$, the relation \eqref{eq:thm1-1} results in the inequality~\eqref{eq_contraction_koopman} as desired.
\end{proof}

\subsection{High-Probability Error Bounds in Latent Space}

While the nominal controller ensures stability for the idealized, error-free system, its performance in practice is heavily affected by the residual dynamics error $d(x_k, u_k)$. To guarantee system safety and robustness, we leverage conformal prediction to establish high-probability bounds on this residual error in the latent space. The choice of the nonconformity score, a key element in conformal prediction, depends on the specific controller being analyzed. In the following, we define two distinct nonconformity scores: one for the nominal feedback controller in Definition~\ref{def:nominal_feedback} and another for the robust controller in Definition~\ref{def:disturbance_rejection}.

\begin{definition}[Forward Nonconformity Score for NFC]
Let $\Delta_{d,k} \coloneqq \|\hat{d}_k(z_k, u_k, z_{d,k}, u_{d,k})\|$ for the residual error in~\eqref{eq:error_dynamics}. The nonconformity score is defined as:
\begin{align}
s_{\text{fwd}, k} &\coloneqq \Delta_{d,k} = \|d_k(z_k, u_k) - d'_k(z_{d,k}, u_{d,k})\| \nonumber \\
&= \|\hat{g}(x_{k + 1}) - (A\hat{g}(x_k) +Bu_k) \\
& \quad \quad - \hat{g}(x_{d, k + 1}) + (A\hat{g}(x_{d,k}) +Bu_{d,k})\|.
\label{eq:fwd_score_nom}
\end{align}
Here, the states $x_k$ and $x_{k+1}$ are obtained by sampling state transitions from the real-world system. The desired states $x_{d,k}$ and $x_{d,k+1}$ are computed from the reference trajectory and corresponding control inputs generated by the planner.
\label{def:fwd_cp_score_nfc}
\end{definition}

\begin{definition}[Forward Nonconformity Score for CRDR]
Let $\Delta_{d,k} \coloneqq \|\hat{d}_k(z_k, u_k, z_{d,k}, u_{d,k})\|$ for the residual error in~\eqref{eq:error_dynamics}. The nonconformity score is defined as:
\begin{equation}
s_{\text{fwd}, k}= \sqrt{\overline{m}}\Delta_{d,k}+\Delta_{v, k}
\label{eq:fwd_score}
\end{equation}
where $\sqrt{\overline{m}}$ denotes the maximum singular value of $\Theta$ in~\eqref{eq:robust_constraint}. The term $\Delta_{v,k}$, defined also in~\eqref{eq:robust_constraint}, is determined by applying the controller from Definition \ref{def:disturbance_rejection} to a target trajectory sampled from the same distribution as the deployment environment. For practical application to a physical system, a surrogate model of the dynamics can first be identified from collected data. This model is then used to simulate the tracking of desired trajectories, allowing the data sampling of $\Delta_{v,k}$.
\label{def:fwd_cp_score_robust}
\end{definition}

\begin{theorem}
Suppose that the forward nonconformity scores in Definitions~\ref{def:fwd_cp_score_nfc} and~\ref{def:fwd_cp_score_robust} are exchangeable with the unseen, future score $s_{\text{fwd}}$ from the real world, i.e., they satisfy the following inequality via conformal prediction:
\begin{equation}
\Pr[s_{\text{fwd}} \leq q_{1-\delta}]\geq1-\delta
\label{eq:cp_ineq}
\end{equation}
where $\delta=\tfrac{\alpha}{K}\in(0,1)$ is a user-defined failure probability, and $q_{1-\delta}$ is the $(1-\delta)$th quantile of the empirical distribution of past scores. Then the tracking error in latent space is exponentially bounded with a probability of at least $1-\alpha$ for each time step $k$ as follows:
\begin{equation}
    \Pr\left[||e_{k+K}||\leq\gamma^K\left(\tfrac{v_k}{\sqrt{\underline{m}}}-\Delta r\right)+\Delta r\right]\geq 1-\alpha
\label{eq:thm2}
\end{equation}
where for the controller defined in Definition \ref{def:nominal_feedback},
\begin{equation}
    \Delta r\coloneqq\tfrac{\sqrt{\overline{m}}q_{1-\delta}}{(1-\gamma)\sqrt{\underline{m}}}
\end{equation}
for the controller defined in Definition \ref{def:disturbance_rejection},
\begin{equation}
    \Delta r\coloneqq\tfrac{q_{1-\delta}-\rho}{(1-\gamma)\sqrt{\underline{m}}}.
\end{equation}
This bound implies that if $\tfrac{v_0}{\sqrt{\underline{m}}}-\Delta r \leq 0$, we have
\begin{equation}
    \Pr\left[||e_{k+K}||\leq \Delta r\right]\geq 1-\alpha.
\end{equation}
Here, $\sqrt{\overline{m}}$ and $\sqrt{\underline{m}}$ denote the maximum and minimum singular values of $\Theta$ in~\eqref{eq:robust_constraint}.
\label{thm:2}
\end{theorem}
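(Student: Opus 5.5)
The plan is to work entirely with the Lyapunov function $v_k=\|\Theta e_k\|$ and derive a one-step recursion $v_{k+1}\le \gamma v_k + c_k$, where $c_k$ is a disturbance term controlled by the forward nonconformity score. We then unroll this recursion over $K$ steps, use a union bound over the $K$ conformal events, and convert from $v$ back to $\|e\|$ using $\sqrt{\underline{m}}\|e\|\le\|\Theta e\|\le\sqrt{\overline{m}}\|e\|$.

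\emph{Step 1 (NFC).} From the error dynamics \eqref{eq:error_dynamics} and the triangle inequality,
\begin{equation*}
v_{k+1}=\|\Theta(A_{cl}e_k+\hat d_k)\|\le \|\Theta A_{cl}e_k\|+\|\Theta\hat d_k\|.
\end{equation*}
The Lyapunov condition \eqref{eq:lmi} gives $\|\Theta A_{cl}e_k\|^2=e_k^\top A_{cl}^\top M A_{cl}e_k\le\gamma\, e_k^\top M e_k$. So $\|\Theta A_{cl}e_k\|\le\sqrt{\gamma}\,v_k\le\gamma v_k$ if we adopt the paper's convention that $\gamma$ is the decay rate of the norm. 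I expect this $\sqrt\gamma$ versus $\gamma$ bookkeeping to be a minor obstacle, and I would state the convention explicitly. Also $\|\Theta\hat d_k\|\le\sqrt{\overline m}\,\Delta_{d,k}=\sqrt{\overline m}\,s_{\mathrm{fwd},k}$. Hence $v_{k+1}\le\gamma v_k+\sqrt{\overline m}\,s_{\mathrm{fwd},k}$.

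\emph{Step 1 (CRDR).} Here $e_{k+1}=Ae_k+B\Delta u_k+\hat d_k$. The constraint in \eqref{eq:robust_constraint} gives
\begin{equation*}
v_{k+1}\le\gamma v_k-\rho+\Delta_{v,k}+\sqrt{\overline m}\,\Delta_{d,k}=\gamma v_k-\rho+s_{\mathrm{fwd},k},
\end{equation*}
with $s_{\mathrm{fwd},k}$ as in \eqref{eq:fwd_score}. In both cases $v_{k+1}\le\gamma v_k+\beta_k$, where $\beta_k=\sqrt{\overline m}\,s_{\mathrm{fwd},k}$ for NFC and $\beta_k=s_{\mathrm{fwd},k}-\rho$ for CRDR.

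\emph{Step 2 (probability).} Let $E_j$ be the event $s_{\mathrm{fwd},k+j}\le q_{1-\delta}$ for $j=0,\dots,K-1$. By \eqref{eq:cp_ineq}, each $E_j$ has probability at least $1-\delta$, with $\delta=\alpha/K$. The union bound gives $\Pr[\cap_j E_j]\ge 1-K\delta=1-\alpha$. This step needs exchangeability at each step only marginally, which is exactly what \eqref{eq:cp_ineq} supplies; the union bound sidesteps any dependence across time steps.

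\emph{Step 3 (unrolling).} On $\cap_j E_j$, every $\beta_{k+j}\le\bar\beta$, where $\bar\beta=\sqrt{\overline m}\,q_{1-\delta}$ for NFC and $\bar\beta=q_{1-\delta}-\rho$ for CRDR. The comparison lemma for the linear recursion then gives
\begin{equation*}
v_{k+K}\le\gamma^Kv_k+\bar\beta\,\frac{1-\gamma^K}{1-\gamma}=\gamma^K\Big(v_k-\frac{\bar\beta}{1-\gamma}\Big)+\frac{\bar\beta}{1-\gamma}.
\end{equation*}
In the CRDR case, if $\bar\beta<0$, each step's bound stays valid because we never divide by $\bar\beta$. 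Dividing by $\sqrt{\underline m}$ and using $\|e_{k+K}\|\le v_{k+K}/\sqrt{\underline m}$ yields \eqref{eq:thm2} with $\Delta r=\bar\beta/((1-\gamma)\sqrt{\underline m})$, which matches both stated formulas.

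\emph{Corollary.} If $v_0/\sqrt{\underline m}-\Delta r\le0$, the $\gamma^K$ term is nonpositive, so $\|e_{K}\|\le\Delta r$ with the same probability. Applying this with $k=0$, or noting inductively that the sublevel set $v\le\sqrt{\underline m}\,\Delta r$ is invariant on the good events, gives the final claim.
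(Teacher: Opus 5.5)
Your argument follows the paper's proof: the one-step bound $v_{k+1}\le\gamma v_k-\rho+\Delta_{v,k}+\sqrt{\overline{m}}\,\Delta_{d,k}$ from the CRDR constraint, then a union bound over the $K$ conformal events with $\delta=\alpha/K$, geometric unrolling, and $\|e\|\le v/\sqrt{\underline{m}}$. Your CRDR half and both formulas for $\Delta r$ coincide with the paper's. The only departure is NFC: the paper simply sets $\rho=\Delta_{v,k}=0$ in the CRDR argument, while you try to obtain the contraction from \eqref{eq:lmi} directly.

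That attempt exposes a real defect, and your patch is written backwards. From \eqref{eq:lmi} you correctly get $\|\Theta A_{cl}e_k\|\le\sqrt{\gamma}\,v_k$. But $\sqrt{\gamma}>\gamma$ on $(0,1)$, so ``$\sqrt{\gamma}\,v_k\le\gamma v_k$'' is false. This is not a matter of convention: \eqref{eq:lmi} and the contraction theorem use $\gamma$ as the rate of $v^2$, whereas the CRDR constraint uses it as the rate of $v$. Under \eqref{eq:lmi} as written, the NFC claim actually fails. Take the scalar case $A_{cl}=\sqrt{\gamma}$, $M=1$, $e_0=0$, $\hat d_k\equiv q_{1-\delta}>0$. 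Then $\|e_2\|=(1+\sqrt{\gamma})q_{1-\delta}$, which exceeds the claimed bound $(1+\gamma)q_{1-\delta}$ for $K=2$ with probability one. So you must do one of two things. Either add the hypothesis $A_{cl}^\top MA_{cl}\preceq\gamma^2M$ (equivalently $\|\Theta A_{cl}e\|\le\gamma\|\Theta e\|$), or weaken the conclusion to
\[
\|e_{k+K}\|\le\gamma^{K/2}\tfrac{v_k}{\sqrt{\underline{m}}}+\tfrac{\sqrt{\overline{m}}\,q_{1-\delta}}{(1-\sqrt{\gamma})\sqrt{\underline{m}}}\bigl(1-\gamma^{K/2}\bigr),
\]
which your recursion does give with probability at least $1-\alpha$. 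The paper's proof hides the same assumption: setting $\rho=\Delta_{v,k}=0$ in the CRDR constraint is exactly assuming $\|\Theta A_{cl}e_k\|\le\gamma\|\Theta e_k\|$. So this is a flaw in the statement rather than in your strategy, but your write-up should state the fix explicitly instead of invoking a convention.

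For the final claim, only your first option works. Applying the bound at $k=0$, which is what the paper's unrolling from $0$ to $K-1$ yields, gives $\|e_K\|\le\Delta r$ with probability at least $1-\alpha$. The invariance argument for general $k$ needs the conformal event at every step $0,\dots,k+K-1$, so the union bound only gives $1-(k+K)\alpha/K$.
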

\begin{proof}
    Consider the Lyapunov function in~\eqref{eq:robust_constraint}. Its time evolution along the dynamics \eqref{eq:lifted_system} and \eqref{eq:ref_lifted_system} with the controller of Definition~\ref{def:disturbance_rejection} can be computed as
    \begin{align}
    \begin{aligned}        
        v_{k+1}&=\|\Theta (z_{k+1} - z_{d, k+1})\|\\
        &=\left\|\Theta\left(Ae_k+B\Delta u_k + \hat{d}_k(z_k, u_k, z_{d,k}, u_{d,k})\right)\right\|\\
        &\leq\gamma\|\Theta e_k\| - \rho + \Delta_{v, k} + \|\Theta \hat{d}_k(z_k, u_k, z_{d,k}, u_{d,k})\|.
    \end{aligned}
    \label{eq:thm2-1}    
    \end{align}
    Since the last term can be bounded as $\|\Theta \hat{d_k}\|\leq \sqrt{\overline{m}}\|\hat{d}_k\|$, the inequality \eqref{eq:thm2-1} can be written as:
    \begin{align}
        v_{k+1}\leq \gamma\|\Theta e_k\| - \rho + \Delta_{v, k} +\sqrt{\overline{m}}\hat{d_k}.
    \label{eq:thm2-2}
    \end{align}
    Unrolling the recursion in \eqref{eq:thm2-2} from $k=0$ to $K-1$ yields
    \begin{align}
        v_K &\leq \gamma^Kv_0 + \sum_{k=0}^{K-1} \gamma^{K-1-k}\left(- \rho + \Delta_{v, k} +\sqrt{\overline{m}}\hat{d_k}\right)\\
        &\leq\gamma^Kv_0 - \rho \tfrac{1-\gamma^K}{1-\gamma} + \sum_{k=0}^{K-1} \gamma^{K-1-k}s_{\text{fwd}, k}.
    \end{align}
    The probability that \eqref{eq:cp_ineq} holds for all steps $k=0,...,K-1$ can be computed by the union bound as follows:
    \begin{align}
        \Pr\left[\cup_{k=0}^{K-1}{s_k>q_{1-\delta}}\right] \leq \sum_{k=0}^{K-1}\Pr[s_k >q_{1-\delta}] \leq \sum_{k=0}^{K-1}\tfrac{\alpha}{K} = \alpha
    \end{align}
    Under this high-probability event, we obtain the following high-probability bound on the Lyapunov function:
    \begin{align}
        \Pr\left[v_k\leq\gamma^Kv_0 + \left(q_{1-\delta} - \rho\right)\tfrac{1-\gamma^K}{1-\gamma}\right]\geq 1-\alpha.
    \label{eq:lyapunov_prob_ineq}
    \end{align}
    Finally, we relate the Lyapunov function $v_k$ back to the tracking error $\|e_k\|$.
    From $M\succeq \underline{m}\mathbb{I}$, we get
    \begin{align}
        v^2_K=e_K^TMe_k\geq\underline{m}\|e_K\|^2
    \end{align}
    which implies $\|e_K\|\leq\tfrac{v_K}{\sqrt{\underline{m}}}$.
    Thus, \eqref{eq:lyapunov_prob_ineq} reduces to
    \begin{align}
        \Pr\left[\|e_K\|\leq \gamma^K\tfrac{v_0}{\sqrt{\underline{m}}} + \tfrac{q_{1-\delta} - \rho}{(1-\gamma)\sqrt{\underline{m}}}(1-\gamma^K)\right]\geq 1-\alpha
    \end{align}
    Using the definition $\Delta r\coloneqq \tfrac{q_{1-\delta}-\rho}{(1-\gamma)\sqrt{\underline{m}}}$,
    \begin{equation}
        \Pr\left[||e_K||\leq\gamma^K\left(\tfrac{v_0}{\sqrt{\underline{m}}}-\Delta r\right)+\Delta r\right]\geq 1-\alpha
    \end{equation}
    By setting $\rho=\Delta_{v, k}=0$ in the previous discussion, we can also prove \eqref{eq:thm2} for the nominal feedback controller with $\Delta r = \tfrac{\sqrt{\overline{m}}q_{1-\delta}}{(1-\gamma)\sqrt{\underline{m}}}$
\end{proof}

The slack variable $\Delta_{v,k}$ within the score of Definition \ref{def:fwd_cp_score_robust} is an outcome of the online optimization performed by the controller at each step. Therefore, to obtain a distribution of these scores for conformal prediction before deployment, one must rely on extensive simulations with a surrogate model. We could circumvent this requirement by using the simpler score from Definition \ref{def:fwd_cp_score_nfc} also for the controller in Definition~\ref{def:disturbance_rejection}. In this case, we can derive an error bound by retaining the explicit summation of the slack variables $\Delta_{v,k}$. Specifially, under the high-probability event where $||\hat{d}_k|| \le q_{1-\delta}$ for all $k \in \{0, \dots, K-1\}$, we get
$$v_K \le \gamma^K v_0 + (-\rho + \sqrt{\overline{m}}q_{1-\delta})\tfrac{1-\gamma^K}{1-\gamma} + \sum_{k=0}^{K-1}\gamma^{K-1-k}\Delta_{v,k}.$$
\begin{corollary}[Trajectory-Dependent Error Bound]
When employing CRDR of Definition~\ref{def:disturbance_rejection} with the nominal nonconformity score of Definition \ref{def:fwd_cp_score_nfc}, the probabilistic bound on the tracking error $||e_K||$ is given by:
\begin{align} \label{eq:traj_dep_bound}
Pr\left[||e_K|| \le \tfrac{1}{\sqrt{\underline{m}}} \left( \gamma^K v_0 + \tfrac{1-\gamma^K}{1-\gamma}(-\rho + \sqrt{\overline{m}}q_{1-\delta})\right.\right. \nonumber \\
\left.\left. + \sum_{k=0}^{K-1}\gamma^{K-1-k}\Delta_{v,k} \right)\right] \ge 1-\alpha
\end{align}
\label{cor:trajectory_dep}
\end{corollary}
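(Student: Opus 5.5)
The plan is to reuse the Lyapunov recursion from the proof of Theorem~\ref{thm:2}, but leave the slack variables inside the sum instead of folding them into the score. Everything is done for the CRDR of Definition~\ref{def:disturbance_rejection}, with $v_k=\|\Theta e_k\|$.

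First, the one-step bound. Because the CRDR constraint~\eqref{eq:robust_constraint} holds at the optimizer, the triangle inequality applied to the error dynamics~\eqref{eq:error_dynamics} gives
\begin{equation*}
v_{k+1}\le \gamma v_k-\rho+\Delta_{v,k}+\|\Theta\hat{d}_k\|\le \gamma v_k-\rho+\Delta_{v,k}+\sqrt{\overline{m}}\,\Delta_{d,k},
\end{equation*}
which is~\eqref{eq:thm2-2} again. Here $\Delta_{d,k}=s_{\text{fwd},k}$ is exactly the nominal score of Definition~\ref{def:fwd_cp_score_nfc}, so the slack does not enter the score.

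Second, unroll this from $k=0$ to $K-1$. Since $\gamma\in(0,1)$, every coefficient $\gamma^{K-1-k}$ is nonnegative, so the inequalities can be chained directly:
\begin{equation*}
v_K\le\gamma^Kv_0+\sum_{k=0}^{K-1}\gamma^{K-1-k}\bigl(-\rho+\sqrt{\overline{m}}\,s_{\text{fwd},k}\bigr)+\sum_{k=0}^{K-1}\gamma^{K-1-k}\Delta_{v,k}.
\end{equation*}

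Third, the probabilistic step. With $\delta=\alpha/K$, the conformal guarantee~\eqref{eq:cp_ineq} for the nominal scores gives $\Pr[s_{\text{fwd},k}>q_{1-\delta}]\le\delta$ for each $k$. The union bound over $k=0,\dots,K-1$ then shows that the event $E=\{s_{\text{fwd},k}\le q_{1-\delta}\ \forall k\}$ has probability at least $1-\alpha$. On $E$, substitute $q_{1-\delta}$ for each score and use $\sum_{k=0}^{K-1}\gamma^{K-1-k}=\tfrac{1-\gamma^K}{1-\gamma}$. This yields the displayed bound on $v_K$ that precedes the corollary.

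Finally, $M\succeq\underline{m}\mathbb{I}$ gives $v_K^2=e_K^\top Me_K\ge\underline{m}\|e_K\|^2$, hence $\|e_K\|\le v_K/\sqrt{\underline{m}}$ on $E$. Dividing the $v_K$ bound by $\sqrt{\underline{m}}$ gives~\eqref{eq:traj_dep_bound} with probability at least $1-\alpha$.

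The main obstacle is the status of the slack terms $\Delta_{v,k}$. They are random and trajectory-dependent, so the bound holds only as an event-wise statement: the inequality is asserted pathwise on $E$, with the realized slacks appearing on the right-hand side. No probabilistic control of the $\Delta_{v,k}$ is needed. What must be checked is that the exchangeability in~\eqref{eq:cp_ineq} refers to the nominal scores $\Delta_{d,k}$ generated under the CRDR closed loop, so that the per-step coverage is valid.
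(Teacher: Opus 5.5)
Your proposal is correct and follows essentially the same route as the paper. Like the paper, you reuse the one-step CRDR Lyapunov inequality from the proof of Theorem~\ref{thm:2} while keeping the realized slacks $\Delta_{v,k}$ outside the score, apply the union bound with $\delta=\alpha/K$ to the event $\|\hat{d}_k\|\le q_{1-\delta}$ for all $k$, and conclude via $\|e_K\|\le v_K/\sqrt{\underline{m}}$. The only cosmetic point is that under CRDR the error update you actually use is $e_{k+1}=Ae_k+B\Delta u_k+\hat{d}_k$, not the $A_{cl}$ form of~\eqref{eq:error_dynamics}.
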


The error bound is no longer a simple fixed-radius ball, but rather a trajectory-dependent bound that depends on the history of $\Delta_{v,0}, \dots, \Delta_{v,K-1}$ generated by the controller.

\subsection{High-Probability Error Bounds in State Space}
Finally, we consider the error bound in the state space when the tracking error in latent space is bounded as in Theorem \ref{thm:2}.  When the lifting and its inverse mapping are trained separately, e.g., as neural networks, the inverse mapping $\hat{g}_{\text{inv}}$ may not be equal to the inverse of the lifting function $\hat{g}^{-1}$. We define a nonconformity score to quantify such a discrepancy.
\begin{definition}[Round-trip Nonconformity Score]
    The \replaced{`round-trip'}{``round-trip''} nonconformity score is defined as:
    \begin{equation}
        s_k \coloneqq \|x_k-\hat{g}_{\text{inv}}(\hat{g}(x_k))\|
    \end{equation}
\end{definition}
\begin{theorem}[State-Space Tracking Error Bound]
    Suppose that the following assumptions hold:
    \begin{enumerate}
        \item The conditions of Theorem \ref{thm:2} hold, i.e., the tracking error is bounded as $\Pr\left[||e_k||\leq\epsilon_K\right]\geq 1-\alpha$, where $\epsilon_K = \gamma^K\left(\tfrac{v_0}{\sqrt{\underline{m}}}-\Delta r\right)+\Delta r$
        \item $1-\beta/2$ quantile of the round-trip nonconformity score $q_{\text{rt}}$ provides the guarantee $\Pr[\|x_K-\hat{g}_{\text{inv}}(\hat{g}(x_K))\|\leq q_{\text{rt}}]\geq 1-\beta/2$. \label{asp:1}
        \item $\hat{g}_{\text{inv}}$ is $L$-Lipschitz continuous with constant $L>0$. \label{asp:2}
        \item The desired state-space trajectory $x_{d,K}$ is given, and its lifted-space counterpart is computed as $z_{d,K} = \hat{g}(x_{d,K})$.
    \end{enumerate}
    Then for each time step $k$, we have the following:
    \begin{equation}
        \Pr\left[||x_{k+K} - x_{d,k+K}||\leq 2q_{\text{rt}} + L\epsilon_K\right]\geq 1-\alpha-\beta.
    \end{equation}

\end{theorem}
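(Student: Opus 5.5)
The plan is a triangle-inequality decomposition in state space that routes through the decoder $\hat{g}_{\text{inv}}$, followed by a union bound over the three probabilistic events involved. Write $j = k+K$ and insert the round-trip reconstructions of both the actual and the desired state:
\begin{align*}
\|x_j - x_{d,j}\| \le{}& \|x_j - \hat{g}_{\text{inv}}(\hat{g}(x_j))\| \\
&+ \|\hat{g}_{\text{inv}}(\hat{g}(x_j)) - \hat{g}_{\text{inv}}(\hat{g}(x_{d,j}))\| \\
&+ \|\hat{g}_{\text{inv}}(\hat{g}(x_{d,j})) - x_{d,j}\|.
\end{align*}
The first and third terms are round-trip nonconformity scores, evaluated at the true state and at the desired state. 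The middle term is where the latent-space bound enters.

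For the middle term, I would use assumption~4: since $z_j = \hat{g}(x_j)$ and $z_{d,j} = \hat{g}(x_{d,j})$, its argument difference is exactly $e_j$. The $L$-Lipschitz continuity of $\hat{g}_{\text{inv}}$ then bounds the middle term by $L\|e_j\|$. On the event of assumption~1, which is supplied by Theorem~\ref{thm:2} and has probability at least $1-\alpha$, this is at most $L\epsilon_K$.

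For the two outer terms, I would invoke assumption~2 separately for $x_j$ and for $x_{d,j}$. Each gives a bound of $q_{\text{rt}}$ with failure probability at most $\beta/2$. Combining the three failure events, with probabilities at most $\alpha$, $\beta/2$ and $\beta/2$, by the union bound, all three inequalities hold simultaneously with probability at least $1-\alpha-\beta$. On that event the sum is at most $2q_{\text{rt}} + L\epsilon_K$, which is the claimed bound.

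The main obstacle is justifying the round-trip guarantee at the desired state $x_{d,j}$. Assumption~2 is stated only for the realized state $x_K$, but the proof needs the same bound at $x_{d,j}$. I would first try to argue that reference states are drawn from the same distribution as the calibration data, so that the exchangeability behind conformal prediction also covers $x_{d,j}$ and gives a second $\beta/2$ guarantee. This is precisely why the round-trip quantile is taken at level $1-\beta/2$ rather than $1-\beta$. Failing that, since $x_{d,j}$ is known in advance, one could simply evaluate $\|x_{d,j}-\hat{g}_{\text{inv}}(\hat{g}(x_{d,j}))\|$ directly and check that it is at most $q_{\text{rt}}$, which costs no probability at all. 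A secondary point is the time-index bookkeeping: the bound of Theorem~\ref{thm:2} started at step $k$ is applied at $j=k+K$, and $\epsilon_K$ is computed with $v_k$ in place of $v_0$.
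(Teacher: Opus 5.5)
Your proposal is correct and follows the paper's proof essentially step for step. It uses the same three-term triangle inequality through $\hat{g}_{\text{inv}}$, the Lipschitz bound $L\epsilon_K$ on the middle term via Theorem~\ref{thm:2}, and a union bound over failure events of probability $\beta/2$, $\alpha$, $\beta/2$. The point you flag is also a step the paper takes without comment: it applies assumption~2 to the desired state $x_{d,K}$ even though that assumption is stated only for $x_K$. Either of your fixes closes it: the exchangeability argument for reference states, or the direct deterministic check of $\|x_{d,K}-\hat{g}_{\text{inv}}(\hat{g}(x_{d,K}))\|$, provided that check passes.
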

\begin{proof}
    By applying the triangle inequality, we obtain:
    \begin{align}
        \|x_K - x_{d,K}\|&\leq \|x_K-\hat{g}_{\text{inv}}(z_K)\|\nonumber
        +\|\hat{g}_{\text{inv}}(z_K)-\hat{g}_{\text{inv}}(z_{d,K})\|\\
        &+\|\hat{g}_{\text{inv}}(z_{d,K})-x_{d,K}\|
        \label{eq:thm3-1}
    \end{align}
    The first and third term are the round-trip error for the true state and the desired state.  By assumption \ref{asp:1}, both terms are independently bounded by the conformal quantile $q_{\text{rt}}$ each with a probability of at least $1-\beta/2$.  Let these be Events A and C.
    The second term is bounded using the $L$-Lipschitz property of $\hat{g}_{\text{inv}}$ (Assumption \ref{asp:2}).  From the premise of Theorem \ref{thm:2}, the lifted-space error is bounded by $\epsilon_K$ with a probability of at least $1-\alpha$. Let this be Event~B.
    The overall bound in inequality \eqref{eq:thm3-1} holds if Events~A, B, and C occur simultaneously. Using the union bound, the probability that at least one of these events fails is at most $\beta/2 + \alpha + \beta/2 =\alpha + \beta$. Thus, the joint event occurs with a probability of at least $1-\alpha-\beta$.
    When this joint event occurs, we substitute the individual bounds into inequality \eqref{eq:thm3-1}:
    \begin{align}
        \|x_K - x_{d,K}\|\leq q_{\text{rt}} + L\epsilon_K + q_{\text{rt}} = 2q_{\text{rt}} + L\epsilon_K
    \end{align}
    This inequality holds with a probability of at least $1-\alpha-\beta$, which completes the proof.
\end{proof}

This analysis provides a crucial insight: the tracking error bound is not solely determined by modeling error. Instead, it is explicitly shaped by key design parameters—the controller's Lyapunov matrix eigenvalues ($\overline{m}, \underline{m}$) and the decoder's Lipschitz constant $L$. This offers a guideline for designing Koopman controllers, directly linking these fundamental parameters to the achievable tracking precision.

\added{Conditions such as latent contraction, a full-rank Jacobian, and the Lipschitz property of the decoder can be enforced during training through appropriate loss functions. However, in our simulations, these conditions are verified \textit{a posteriori} for simplicity. Our empirical data suggest these conditions are largely satisfied within the calibration region. Enforcing these properties explicitly by design can be an important direction for future work.}
\section{Numerical Simulations}
\label{sec_simulation}
%%%%%%%%%%%%%%%%%%
%%%%%%%%%%%%%%%%%%

We first validate our proposed method using the Dubins car model, which exhibits nonholonomic behavior and nonlinear state evolution. The discrete-time dynamics of the Dubins car are given by $x_{k+1} = x_k + v_k \Delta t \cos \theta_k$, $y_{k+1} = y_K +v_k \Delta t \sin \theta$, $\theta_{k+1} = \theta_k + \omega \Delta t$, and $v_{k+1} = v_k + a_k\Delta t$. Here, $(x, y)$ is the position of the car, $\theta$ is its heading angle, $v$ is the forward velocity, $a$ is the acceleration, $\omega$ is the angular velocity, and $(a,\omega)$ is the control input. 

\subsection{Training}
The training process is conducted in two sequential phases. The first phase focuses on learning a latent representation, while the second identifies a linear dynamical model within that space.

\subsubsection{Data Collection}
Training data is generated in simulation with constant speed and random steering input sampled at 10-step intervals. We collect 1,000 episodes (100 steps each) for representation learning and 100 episodes for dynamics identification.

\subsubsection{Phase 1: Latent Representation Learning}
An encoder--decoder architecture is trained to find an effective latent representation of the system dynamics. The encoder, $g(\cdot)$, an MLP, maps an observation $x_k \in \mathbb{R}^{4}$ (vehicle position and orientation) to a latent state $z_k \in \mathbb{R}^{6}$. A corresponding MLP decoder, $h(\cdot)$, reconstructs the observation from this latent state. Both networks feature a 256-dimensional hidden layer with ReLU activation and Batch Normalization. This network is trained by minimizing a composite loss function, $\mathcal{L}_{\text{NN}}$, designed to encourage accurate reconstruction and dynamically consistent embeddings:
\[
\mathcal{L}_{\text{NN}} = \mathcal{L}_{\text{rss}} + \alpha \mathcal{L}_{\text{recon}} + \beta \mathcal{L}_{\text{ctrl}}
\]
where $\mathcal{L}_{\text{rss}}$ is the one-step prediction error in the latent space (using batch-wise linear approximations), $\mathcal{L}_{\text{recon}} = \mathbb{E}[ \| x_k - h(g(x_k)) \|_2^2 ]$ is the autoencoder's reconstruction error, and $\mathcal{L}_{\text{ctrl}}$ is a controllability penalty.

\subsubsection{Phase 2: Linear Dynamics Identification}
Once the encoder--decoder network is trained, the encoder's parameters are frozen. The learned mapping $z_k=g(x_k)$ is used to transform the entire dataset of observation sequences into a static set of latent state transitions. Subsequently, a single linear dynamical model, $z_{k+1} = Az_k + Bu_k$, is identified by finding the matrices A and B that minimize the following loss function, $\mathcal{L}_{\text{Koopman}}$:
\[
\mathcal{L}_{\text{Koopman}} = \mathcal{L}_{\text{pred}} + w_{\rho} \mathcal{L}_{\rho} + w_{\text{ctrl}} \mathcal{L}_{\text{ctrl}}
\]
Here, $\mathcal{L}_{\text{pred}} = \mathbb{E}[\|z_{k+1} - (Az_k + Bu_k)\|_2^2]$ is the primary mean squared prediction error. The loss also includes two regularization terms: $\mathcal{L}_{\rho} = \rho(A)$, which penalizes the spectral radius of A to encourage stability, and the same controllability loss, $\mathcal{L}_{\text{ctrl}} = -\log(\sigma_{\min}(\bm{C}) + \epsilon) + \lambda \tfrac{\sigma_{\max}(\bm{C})}{\sigma_{\min}(\bm{C}) + \epsilon}$, to ensure that the final linear model is robust and uniformly controllable.  The resulting one-step prediction accuracy is shown in Figure \ref{fig:prediction_state}

\begin{figure}[t]
    \centering
    \includegraphics[width=1\linewidth]{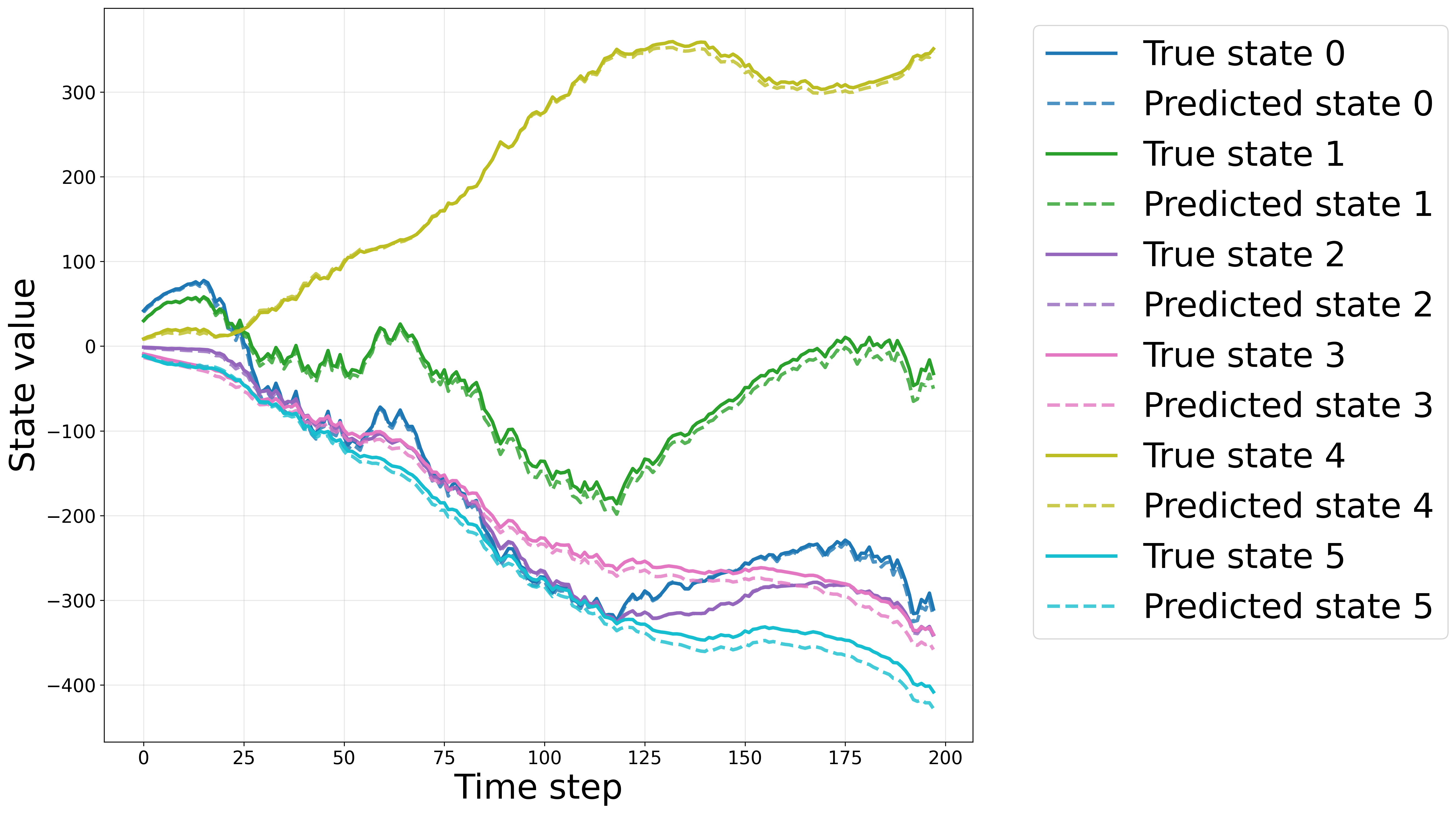}
    \caption{Latent space one-step prediction. The identified linear model accurately predicts the next latent state from the current state and control input. \added{Here, one-step prediction refers to propagating the latent state by one step and then decoding it back to the original space.}}
    \label{fig:prediction_state}
    \vspace{-1em}
\end{figure}
The control feedback gain $K$ for the nominal feedback controller is synthesized by solving a linear matrix inequality (LMI) to theoretically guarantee stability, as confirmed in Figure~\ref{fig:eigenvalues}. 

\begin{figure}[t]
    \centering
    \includegraphics[width=0.8\linewidth]{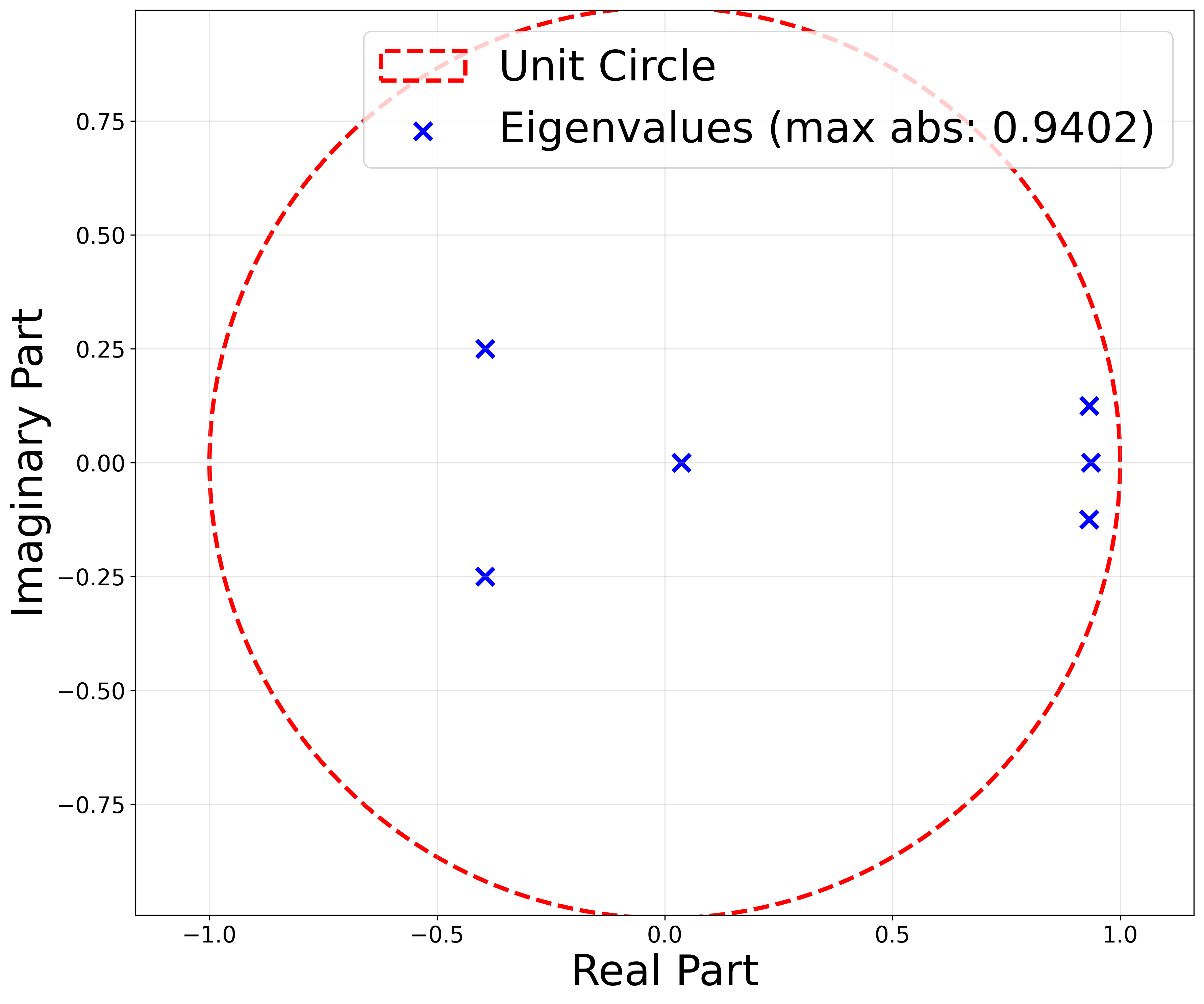}
    \caption{Closed-loop eigenvalues for the nominal feedback controller. The controller design yields a theoretically stable system, with all eigenvalues located inside the unit circle.}
    \label{fig:eigenvalues}
    \vspace{-1em}
\end{figure}
\subsection{Reference Trajectory Construction}
We construct a circular reference trajectory $\{x_{d,k}, u_{d,k}\}_{k=0}^{T-1}$ with radius $r>0$, centered at $(0,r)$, starting from the origin with desired speed $v_d>0$. The reference observation $x_{d,k}$ consists of position $(x_{\mathrm{ref}}, y_{\mathrm{ref}})$ and heading $(\sin\theta_d, \cos\theta_d)$, and the control input $u_{d,k}=\omega_k$ is the angular velocity computed by finite difference $\omega_k \approx \tfrac{\theta_d(t_{k+1}) - \theta_d(t_{k-1})}{2\Delta t}$ and clipped to $[-\pi, \pi]$.

\subsection{Evaluation}
We evaluate the controller performance on the circular trajectory tracking task described previously. The latent representation is trained with loss weights $\alpha=0.1$ and $\beta=0.1$. CRDR is designed with parameters $c_v = 0.01$, $\rho=0.073$ and $\gamma = 0.9$. We compare its performance against the nominal feedback controller (NFC). The tracking evaluation is conducted for a duration of 50 time steps.
\added{The controllability loss weight was the most critical hyperparameter: disabling it caused the condition number of the dynamics matrices to exceed $10^6$, leading to unstable control. The reconstruction weight was stable across $[0.05, 0.2]$, and the embedding dimension $N \approx 1.3n$ balanced expressiveness and controllability.}

\subsection{Results}
The results are presented in Figure \ref{fig:results}. Despite its theoretical stability, NFC consistently fails due to control inputs exceeding the actuator saturation limits ($\pm\pi$ rad/s), causing unbounded error accumulation. In contrast, CRDR achieves excellent tracking by finding inputs that satisfy Lyapunov stability conditions while compensating for the modeling error. Furthermore, the position error of CRDR remains well within the trajectory-dependent theoretical bound derived from Corollary \ref{cor:trajectory_dep}, validating our error analysis.

\begin{figure}[h!]
    \centering
    \includegraphics[width=1\linewidth]{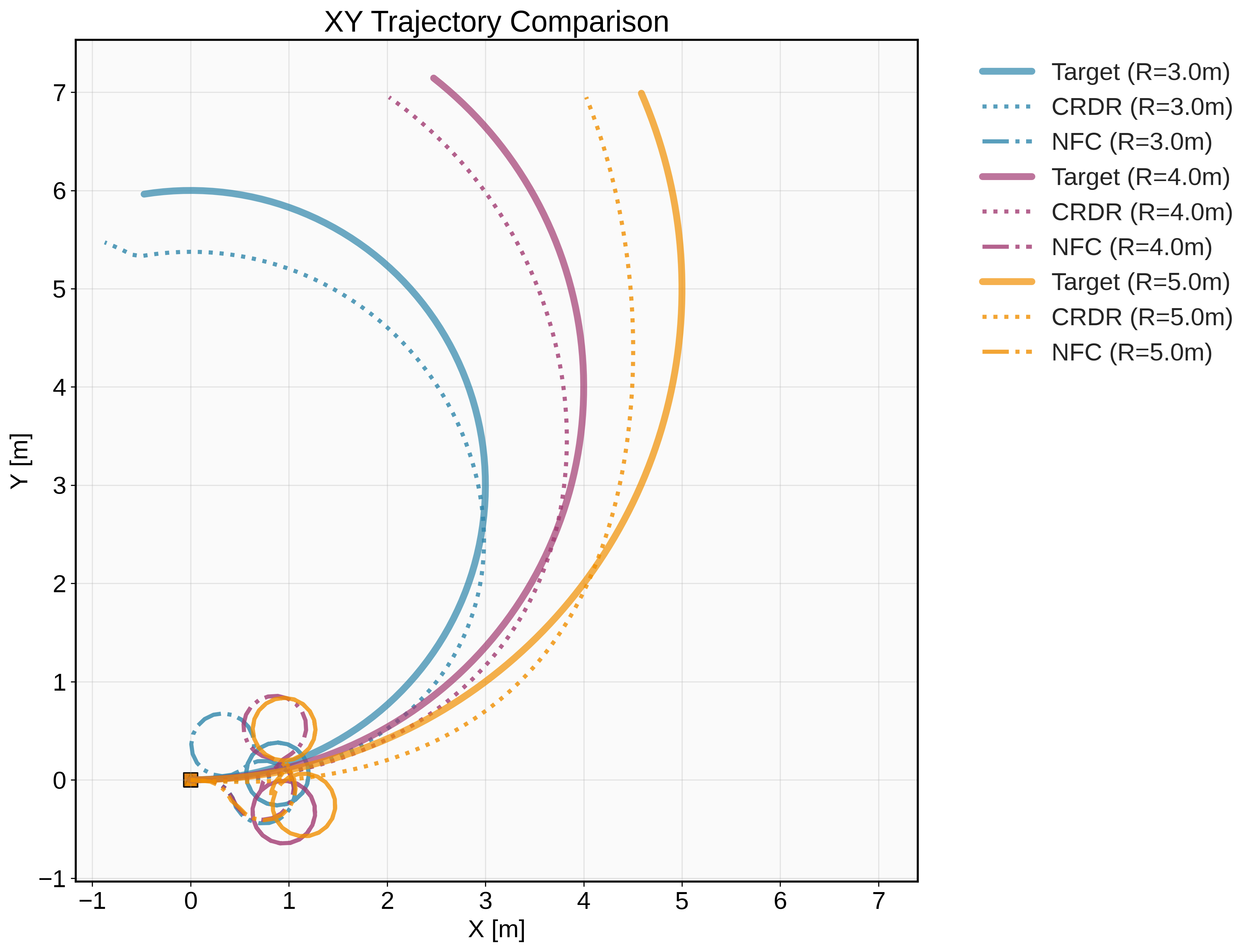}
    \includegraphics[width=1\linewidth]{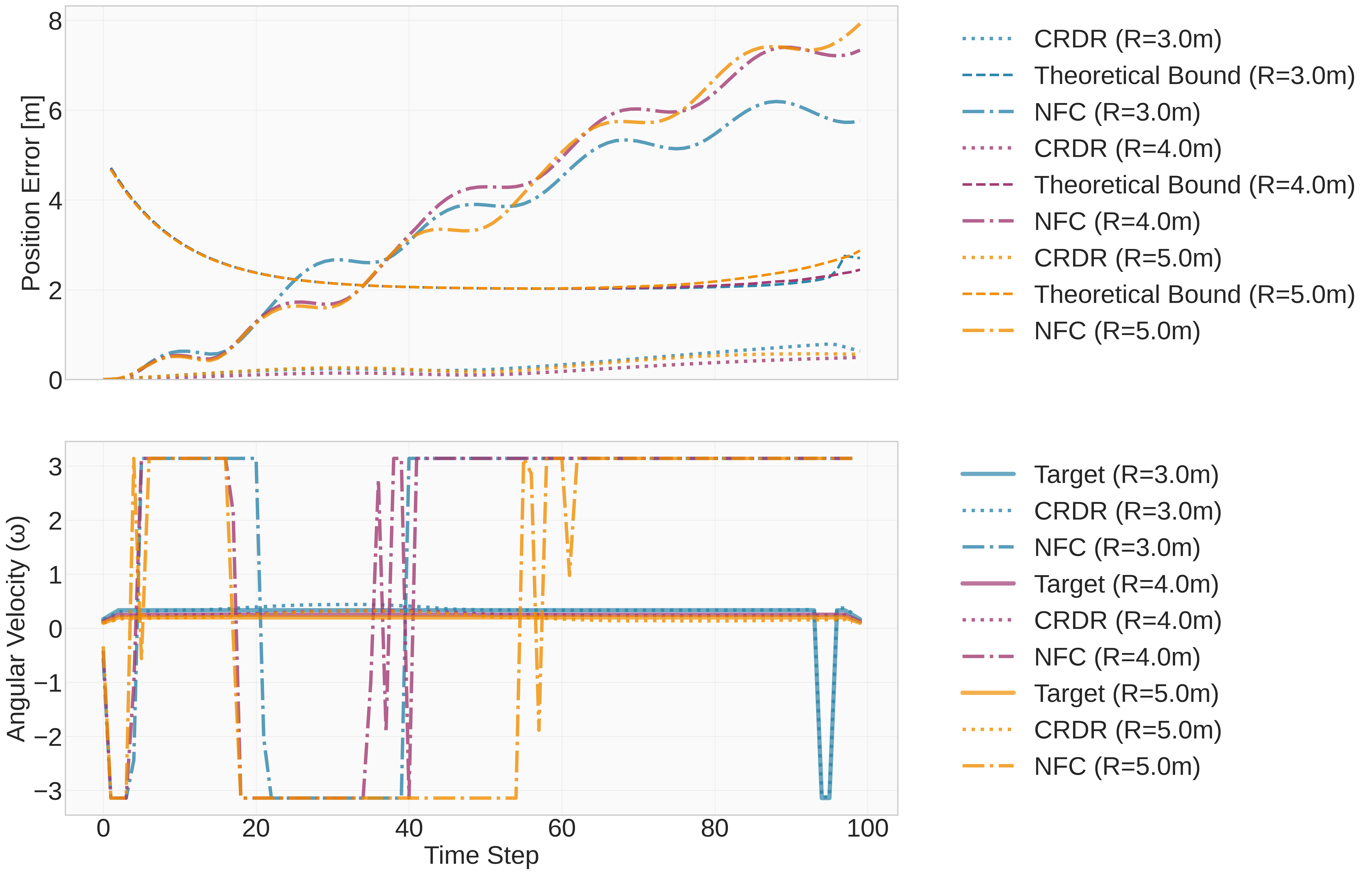}
    \caption{Controller performance comparison. (Top) XY trajectory tracking. (Bottom) Position error with theoretical bounds and the corresponding angular velocity control input ($\omega$) over time.}
    \label{fig:results}
    \vspace{-1em}
\end{figure}
\section{Experiments with a Flapping-Wing Robot}

This section presents the experimental validation of our proposed framework using a flapping-wing robot, hereafter referred to as the Flapper Drone~\cite{flapper_original} (\emph{Flapper Nimble+ from Flapper Drones}, \myhref{https://flapper-drones.com/}{https://flapper-drones.com/}). The experiments are designed to 1) validate the distribution-free error bounds derived using our method, and 2)~demonstrate the improved trajectory tracking performance compared to a built-in PID controller.

\subsection{Experimental Setup}
The Flapper Drone is characterized by highly nonlinear aerodynamic and inertial properties, making it a challenging and suitable platform for evaluating data-driven control methodologies. The drone is equipped with a manufacturer-provided PID controller for low-level attitude and position stabilization. Our approach introduces a high-level, Koopman-based control law that provides target position commands, $\bm{u}_k$, integrated into the underlying PID. We treat the onboard PID controller as part of the unknown nonlinear dynamics, $f$, aiming to model: $\bm{x}_{k+1} = f(\bm{x}_k, \text{PID}(\bm{u}_k))$, $\bm{z}_k = \phi(\bm{x}_k)$, $\bm{z}_{k+1} = A\bm{z}_k + B\bm{u}_k$, and $\hat{\bm{x}}_{k+1} = \psi(\bm{z}_{k+1})$. \added{Here, $\phi$ and $\psi$ are the lifting and decoding functions, respectively,} $\bm{x}_k$ is the drone's state, and our high-level control input \replaced{$\bm{u}_k = (x_{d,k}, y_{d,k}, z_{d,k})$}{$\bm{u}_k = (x_t, y_t, z_t)$} commands the target position. The primary objective is to accurately track circular trajectories with a radius of 0.5 m. \added{This radius ensures the drone remains within the reliable tracking volume of our motion capture system while allowing sufficient margin for the theoretical error bound.}

\subsection{Data Collection and Model Training}
To learn the system dynamics, we first collect a dataset of the drone's position and orientation. This is achieved by commanding the drone to track a predefined circular trajectory with a radius of 0.5 m using its default PID controller. From this dataset, we train a neural network as a surrogate model to predict the one-step-ahead state, $\bm{x}_{k+1}$, given the current state, $\bm{x}_k$, and a target position, $\bm{u}_k$. Subsequently, the Koopman operator for the system is learned from the data using our proposed methodology. \added{The architecture follows an encoder ($\phi$: $12\to1024\to16$), linear dynamics ($\bm{z}_{k+1}=A\bm{z}_k+B\bm{u}_k$, $A\in\mathbb{R}^{16\times16}$, $B\in\mathbb{R}^{16\times3}$), and decoder ($\psi$: $16\to1024\to12$) pipeline with ReLU and batch normalization. The model uses only the current state $\bm{x}_k$ (12-dim: 3D position and flattened rotation matrix) and input $\bm{u}_k$, without stacking past states, yielding sufficient prediction accuracy.}

\subsection{Validation of the Assured Tracking Bound}
To calculate the theoretical tracking error bound, we first generate a calibration dataset for conformal prediction. This is achieved by using the trained surrogate model to simulate CRDR.  CRDR is designed with parameters $c_v=100$, $\rho=1.0$ and $\gamma=0.9$. During this simulation, we sample the necessary data to estimate the quantiles for our error bound calculation. From the learned Koopman dynamics, the singular values of $\Theta$ were calculated as $\sqrt{\overline{m}/\underline{m}} = 1.00$, which suggests isotropic convergence in latent space. The calibration process for conformal prediction yields a forward prediction quantile of $q_{\text{fwd}} = 0.1365$ and a round-trip quantile of $q_{\text{rt}} = 0.4691$. These values are then used to compute the time-varying theoretical tracking error bound. \added{Finding stabilizing controller gains required significant tuning effort; we utilized the learned surrogate model to simulate and optimize these parameters offline, avoiding hardware risk.}

To validate this bound, we conduct flight experiments where the drone tracks circular trajectories in various planes. Figure \ref{fig:error_bound_validation} shows the real-time tracking error from these experiments. It indicates that the tracking error consistently remains below the computed theoretical bound for all trajectories, thereby implying the validity of our approach.

\begin{figure}[t]
    \centering
    \includegraphics[width=1\linewidth]{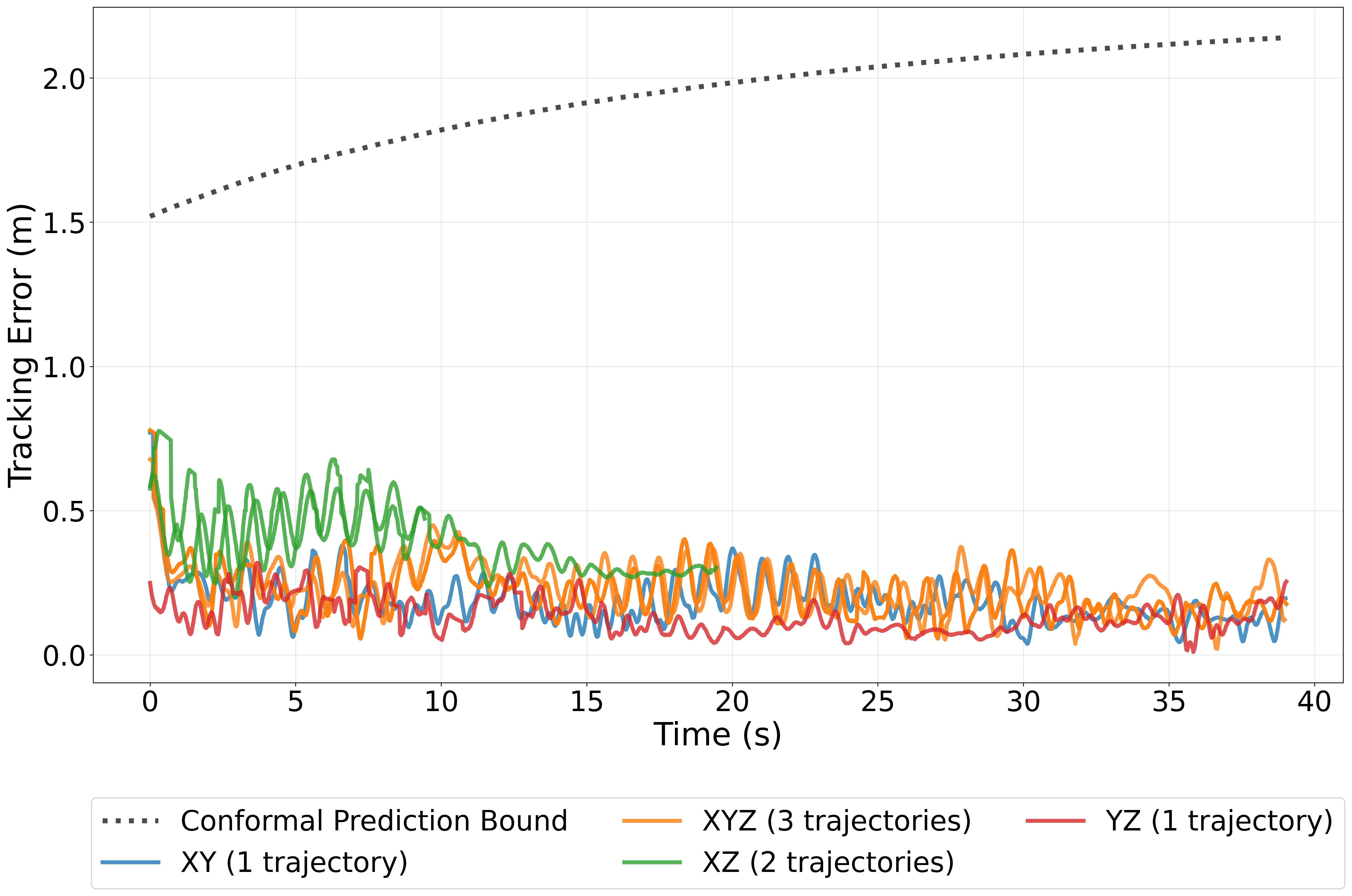}
    \caption{Tracking error of the Koopman-based controller for various trajectories. The measured error (solid lines) consistently stays below the theoretically computed bound (dotted line), validating our assured error bound estimation. \added{For a visual illustration of the spatial trajectories in these planes, see the YZ-plane example in Fig. 6.}}
    \label{fig:error_bound_validation}
    \vspace{-1em}
\end{figure}

\subsection{Improved Tracking Performance}
Finally, we compare tracking performance against the default PID controller on a YZ-plane circular trajectory (Figure~\ref{fig:tracking_comparison}). The PID controller exhibits significant oscillations, while our Koopman-based controller tracks the reference with markedly higher precision, demonstrating the effectiveness of our data-driven approach.

\begin{figure}[t]
    \centering
    \includegraphics[width=1\linewidth]{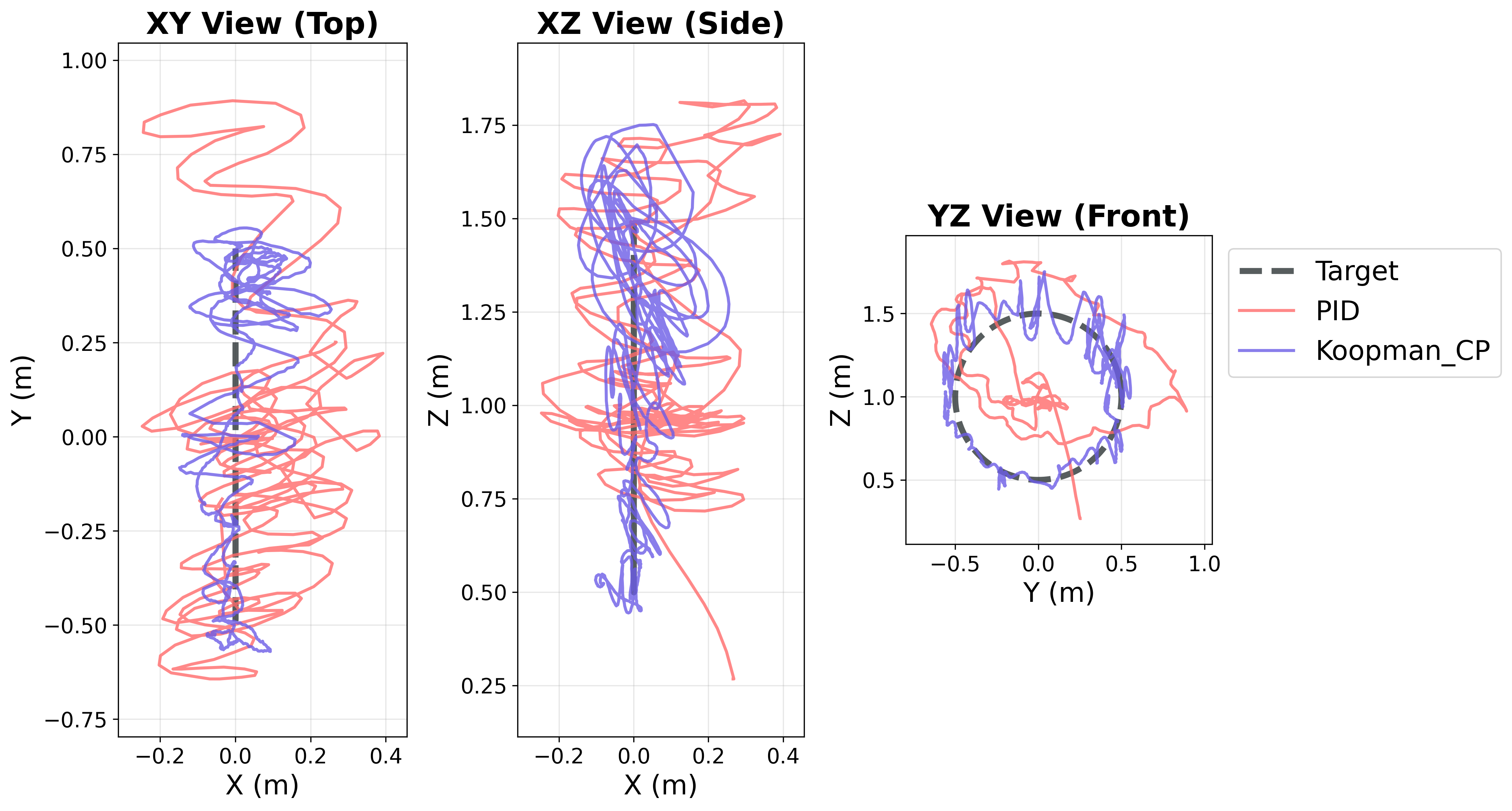}
    \caption{Comparison of tracking performance for a circular trajectory in the YZ plane. The proposed Koopman-based controller (Koopman\_CP, purple) demonstrates significantly reduced oscillations and superior tracking accuracy compared to the default PID controller (red).}
    \label{fig:tracking_comparison}
    \vspace{-1em}
\end{figure}
\section{Conclusion}
\label{sec_conclusion}
We present a framework that integrates conformal prediction with \emph{closed-loop} Koopman-based control of discrete-time nonlinear systems, with its theoretical connection to incremental stability in contraction theory. It provides distribution-free tracking error bounds by quantifying latent space prediction errors without assuming a specific error structure. Our analysis shows that the tracking error depends critically on not just latent modeling error, but also the feedback gain and the decoder's Lipschitz constant, offering a key control design guideline. The framework was validated in simulations with a Dubins car and hardware experiments with a flapping-wing robot, where measured tracking errors consistently stayed within our theoretical bounds.

Future work will aim to reduce the bounds' conservativeness and address exchangeability challenges when tracking diverse trajectories. \added{In particular, our safety guarantees are valid only within the calibration data distribution; out-of-distribution large disturbances or hardware degradation could invalidate them. Extending the evaluation to diverse trajectory families and benchmarking against other high-level planners is also an important future direction.}
\bibliographystyle{IEEEtran}
\bibliography{bib}

@article{Lusch2018,
  title={Deep learning for universal linear embeddings of nonlinear dynamics},
  author={Lusch, Bethany and Kutz, J Nathan and Brunton, Steven L},
  journal={Nature Communications},
  volume={9},
  number={1},
  pages={1--10},
  year={2018},
  publisher={Nature Publishing Group}
}

@article{Nuske2023,
  title={Finite-data error bounds for {Koopman}-based prediction and control},
  author={N{\"u}ske, Feliks and Peitz, Sebastian and Philipp, Friedrich and Schaller, Manuel and Worthmann, Karl},
  journal={Journal of Nonlinear Science},
  volume={33},
  number={14},
  year={2023},
  publisher={Springer}
}

@inproceedings{Mamakoukas2022ACC,
  title={Robust model predictive control with data-driven {Koopman} operators},
  author={Mamakoukas, George and Cairano, Stefano Di and Vinod, Ashish P},
  booktitle={American Control Conference},
  pages={1895--1902},
  year={2022},
  organization={IEEE}
}

@article{Zhang2022Automatica,
  title={Robust tube-based model predictive control with {Koopman} operators},
  author={Zhang, Xiaodong and Pan, Wei and Scattolini, Riccardo and Yu, Shengbo and Xu, Xinjiong},
  journal={Automatica},
  volume={137},
  pages={110114},
  year={2022},
  publisher={Elsevier}
}

@article{Wang2023IJMLC,
  title={Learning-based robust model predictive control with data-driven {Koopman} operators},
  author={Wang, Min and Lou, Xiaofeng and Cui, Bin},
  journal={International Journal of Machine Learning and Cybernetics},
  volume={14},
  pages={3295--3321},
  year={2023},
  publisher={Springer}
}

@inproceedings{Han2022ICLR,
  title={DeSKO: Stability-assured robust control with a deep stochastic {Koopman} operator},
  author={Han, Mengdi and Euler-Rolle, Julian and Katzschmann, Robert},
  booktitle={International Conference on Learning Representations},
  year={2022}
}

@misc{Patel2024Conformal,
  title={Conformal robust control of linear systems},
  author={Patel, Yash and Rayan, Sarah and Tewari, Ambuj},
  year={2024},
  note={arXiv:2405.16250},
  howpublished={arXiv preprint}
}

@article{Korda2018,
  title={Linear predictors for nonlinear dynamical systems: {Koopman} operator meets model predictive control},
  author={Korda, Milan and Mezić, Igor},
  journal={Automatica},
  volume={93},
  pages={149--160},
  year={2018},
  publisher={Elsevier},
  doi={10.1016/j.automatica.2018.03.046}
}

@inproceedings{Mamakoukas2022,
  title={Robust model predictive control with data-driven {Koopman} operators},
  author={Mamakoukas, George and Di Cairano, Stefano and Vinod, Ashish P.},
  booktitle={American Control Conference},
  pages={1895--1902},
  year={2022},
  organization={IEEE},
  doi={10.23919/ACC53348.2022.9867538}
}

@article{flapper_original,
author = {Matěj Karásek  and Florian T. Muijres  and Christophe De Wagter  and Bart D. W. Remes  and Guido C. H. E. de Croon },
title = {A tailless aerial robotic flapper reveals that flies use torque coupling in rapid banked turns},
journal = {Science},
volume = {361},
number = {6407},
pages = {1089-1094},
year = {2018},
doi = {10.1126/science.aat0350},
}

@ARTICLE{koopman_contraction,
  author={Yi, Bowen and Manchester, Ian R.},
  journal={IEEE Transactions on Automatic Control}, 
  title={On the Equivalence of Contraction and {Koopman} Approaches for Nonlinear Stability and Control}, 
  year={2024},
  volume={69},
  number={7},
  pages={4336-4351},
  doi={10.1109/TAC.2023.3319051}}

@inproceedings{takeishi2017learning,
  title={Learning {Koopman} invariant subspaces for dynamic mode decomposition},
  author={Takeishi, Naoya and Kawahara, Yoshinobu and Yairi, Takehisa},
  booktitle={Advances in Neural Information Processing Systems},
  volume={30},
  year={2017}
}

@article{wehmeyer2018time,
  title={Time-lagged autoencoders: Deep learning of slow collective variables for molecular kinetics},
  author={Wehmeyer, Christoph and No{\'e}, Frank},
  journal={The Journal of Chemical Physics},
  volume={148},
  number={24},
  year={2018},
  publisher={AIP Publishing}
}

@inproceedings{morton2018deep,
  title={Deep dynamical modeling and control of unsteady fluid flows},
  author={Morton, Jeremy and Jameson, Antony and Kochenderfer, Mykel J and Witherden, Freddie},
  booktitle={Advances in Neural Information Processing Systems},
  volume={31},
  year={2018}
}

@article{liang2025safenavigationdynamicenvironments,
      title={Safe Navigation in Dynamic Environments Using Data-Driven {Koopman} Operators and Conformal Prediction}, 
      author={Kaier Liang and Guang Yang and Mingyu Cai and Cristian-Ioan Vasile},
      journal = {arXiv preprint},
      year={2025},
      eprint={2504.00352},
      archivePrefix={arXiv},
    volume  = {arXiv:2504.00352},
      primaryClass={cs.RO}, 
}

@inproceedings{conformal1,
author = {Vovk, Volodya and Gammerman, Alexander and Saunders, Craig},
title = {\href{https://eprints.soton.ac.uk/258960/1/Random_ICML99.pdf}{Machine-learning applications of algorithmic randomness}},
year = {1999},
isbn = {1558606122},
booktitle = {International Conference on Machine Learning},
pages = {444–453},
numpages = {10},
}

@book{conformalbook,
  title     = {\href{https://link.springer.com/book/10.1007/b106715}{Algorithmic Learning in a Random World}},
  author    = {Vladimir Vovk and Alexander Gammerman and Glenn Shafer},
  year      = {2022},
  publisher = {Springer},
  isbn      = {978-3-031-06648-1},
  doi       = {10.1007/978-3-031-06649-8},
}

@article{2024_Lindemann_CP-control-survey,
  author={Lindemann, Lars and Zhao, Yiqi and Yu, Xinyi and Pappas, George J and Deshmukh, Jyotirmoy V},
  title   = {\href{https://arxiv.org/abs/2409.00536}{{Formal verification and control with conformal prediction}}},
  journal = {arXiv preprint},
  volume  = {arXiv:2409.00536},
  year    = {2024},
  eprint  = {2409.00536},
  archivePrefix = {arXiv},
  primaryClass  = {eess.SY}
}

@ARTICLE{2023_Lindemann_CP-Planning,
  author={Lindemann, Lars and Cleaveland, Matthew and Shim, Gihyun and Pappas, George J.},
  journal={IEEE Robotics and Automation Letters}, 
  title={Safe Planning in Dynamic Environments Using Conformal Prediction}, 
  year={2023},
  volume={8},
  number={8},
  pages={5116-5123},
  doi={10.1109/LRA.2023.3292071}
}

@INPROCEEDINGS{2024_Zhou_ACP-CBF-MPC,
  author={Zhou, Hao and Zhang, Yanze and Luo, Wenhao},
  booktitle={American Control Conference}, 
  title={Safety-Critical Control with Uncertainty Quantification using Adaptive Conformal Prediction}, 
  year={2024},
  volume={},
  number={},
  pages={574-580},
  doi={10.23919/ACC60939.2024.10644391}
}

@inproceedings{2023_Sun_CP-Diffusion,
 author = {Sun, Jiankai and Jiang, Yiqi and Qiu, Jianing and Nobel, Parth and Kochenderfer, Mykel J and Schwager, Mac},
  booktitle={Advances in Neural Information Processing Systems},
 pages = {80324--80337},
 title = {Conformal Prediction for Uncertainty-Aware Planning with Diffusion Dynamics Model},
 volume = {36},
 year = {2023}
}

@INPROCEEDINGS{2023_Chee_WPC-MPC,
  author={Chee, Kong Yao and Hsieh, M. Ani and Pappas, George J.},
  booktitle = {IEEE Conference on Decision and Control},
  title={Uncertainty Quantification for Learning-based MPC using Weighted Conformal Prediction}, 
  year={2023},
  volume={},
  number={},
  pages={342-349},
  doi={10.1109/CDC49753.2023.10383587}
}

@InProceedings{2023_Dixit_ACP-Planning,
  title = 	 {Adaptive Conformal Prediction for Motion Planning among Dynamic Agents},
  author =       {Dixit, Anushri and Lindemann, Lars and Wei, Skylar X and Cleaveland, Matthew and Pappas, George J. and Burdick, Joel W.},
  booktitle = 	 {Learning for Dynamics \& Control Conference},
  pages = 	 {300--314},
  year = 	 {2023},
}

@INPROCEEDINGS{2023_Yang_CP-Sensor,
  author={Yang, Shuo and Pappas, George J. and Mangharam, Rahul and Lindemann, Lars},
  booktitle = {IEEE Conference on Decision and Control},
  title={Safe Perception-Based Control Under Stochastic Sensor Uncertainty Using Conformal Prediction}, 
  year={2023},
  volume={},
  number={},
  pages={6072-6078},
  doi={10.1109/CDC49753.2023.10384075}}

@article{TingWeiCProbust,
  author       = {Ting-Wei Hsu and Hiroyasu Tsukamoto},
  title        = {Statistical Guarantees in Data-Driven Nonlinear Control: Conformal Robustness for Stability and Safety},
  journal      = {IEEE Control Systems Letters},
  year         = {2025},
  volume       = {9},
  pages        = {997–1002}
}

@article{CPLinearStochastic,
  author       = {Eleftherios E. Vlahakis and Lars Lindemann and Pantelis Sopasakis and Dimos V. Dimarogonas},
  title        = {Conformal Prediction for Distribution-free Optimal Control of Linear Stochastic Systems},
  journal      = {IEEE Control Systems Letters},
  year         = {2024},
  volume       = {8},
  pages        = {2835–2840}
}

@inproceedings{SWei_conformalContraction,
  author        = {Sihang Wei and Melkior Ornik and Hiroyasu Tsukamoto},
  title         = {Conformal Contraction for Robust Nonlinear Control with Distribution-Free Uncertainty Quantification},
  booktitle = {IEEE Conference on Decision and Control, to appear},
  year   = {2025},
  month  = Dec,
  url           = {https://arxiv.org/pdf/2507.13613}
}

@InProceedings{confcont3,
  title = 	 {Uncertainty quantification and robustification of model-based controllers using conformal prediction},
  author =       {Chee, Kong Yao and Silva, Thales C. and Hsieh, M. Ani and Pappas, George J.},
  booktitle = 	 {Learning for Dynamics \& Control Conference},
  pages = 	 {528--540},
  year = 	 {2024},
  volume = 	 {242},
  month = 	 Jul,
}
\end{document}